\def\eqref#1{equation~\ref{#1}}
\def\1{\bm{1}}
\DeclareMathAlphabet{\mathsfit}{\encodingdefault}{\sfdefault}{m}{sl}
\SetMathAlphabet{\mathsfit}{bold}{\encodingdefault}{\sfdefault}{bx}{n}
\def\sR{{\mathbb{R}}}
\def\sS{{\mathbb{S}}}
\definecolor{base}{rgb}{0.955, 0.755, 0.755}
\newtheorem{theorem}{Theorem}
\newtheorem{corollary}[theorem]{Corollary}
\newtheorem{proposition}[theorem]{Proposition}
\newcommand{\ie}{\emph{i.e.}\xspace}
\newcommand{\eg}{\emph{e.g.}\xspace}
\newcommand{\setupname}{\texttt{WCST-ML}\xspace}
\definecolor{applegreen}{RGB}{21, 164, 63}
\definecolor{offdiagsample}{RGB}{68, 114, 196}
\definecolor{diagonalorange}{RGB}{255, 179, 26}
\definecolor{diagonalwine}{RGB}{226, 68, 134}
\title{Which Shortcut Cues Will DNNs Choose?
A Study from the Parameter-Space Perspective}
\author{Luca Scimeca\thanks{First two authors contributed equally.} , Seong Joon Oh$^*$, Sanghyuk Chun, Michael Poli \& Sangdoo Yun \\
NAVER AI Lab, NAVER Corp\\
Seoul, 13638, Republic of Korea \\
\texttt{luca.scimeca@dfci.harvard.edu} \\
}
\begin{document}

\maketitle

\begin{abstract}

Deep neural networks (DNNs) often rely on easy--to--learn discriminatory features, or \textit{cues}, that are not necessarily essential to the problem at hand. For example, ducks in an image may be recognized based on their typical background scenery, such as lakes or streams. This phenomenon, also known as \textit{shortcut learning}, is emerging as a key limitation of the current generation of machine learning models. In this work, we introduce a set of experiments to deepen our understanding of shortcut learning and its implications. We design a training setup with \textit{several} shortcut cues, named \setupname, where each cue is equally conducive to the visual recognition problem at hand. Even under equal opportunities, we observe that (1) certain cues are preferred to others, (2) solutions biased to the easy--to--learn cues tend to converge to relatively flat minima on the loss surface, and (3) the solutions focusing on those preferred cues are far more abundant in the parameter space. We explain the abundance of certain cues via their Kolmogorov (descriptional) complexity: solutions corresponding to Kolmogorov-simple cues are abundant in the parameter space and are thus preferred by DNNs. Our studies are based on the synthetic dataset DSprites and the face dataset UTKFace.
In our \setupname, we observe that the bias of models leans toward simple cues, such as color and ethnicity. Our findings emphasize the importance of active human intervention to remove the model biases that may cause negative societal impacts.

\end{abstract}

\section{Introduction}

Emerging studies on the inner mechanisms of deep neural networks (DNNs) have revealed that many models have \textit{shortcut biases} \citep{cadene2019rubi, mimetics, bahng2019rebias, geirhos2020shortcut}.
DNNs often pick up simple, non-essential cues, which are nonetheless effective within a particular dataset. For example, a DNN trained for the task of animal recognition may recognize ducks while attending on water backgrounds, given the strong correlations between such background cues and the target label \citep{choe2020cvpr}. These shortcut biases often result in a striking qualitative difference between human and machine recognition systems; for example, convolutional neural networks (CNNs) trained on ImageNet extensively rely on texture features, while humans would preferentially look at the global shape of objects \citep{stylizedimagenet}. In other cases, the shortcut bias arises in models that suppress certain streams of inputs: visual question answering (VQA) models often neglect the entire image cues, 
for one does not require images to answer questions like ``what color is the banana in the image?'' \citep{cadene2019rubi}

Since DNNs have successfully outperformed humans on many tasks \citep{alphago,andrew_ng_pneumonia}, such a phenomenon may look benign at face value. However, the shortcut biases become problematic when it comes to the generalization to more challenging test-time conditions, where the shortcuts are no longer valid \citep{cadene2019rubi, mimetics, bahng2019rebias, geirhos2020shortcut}. These biases also cause ethical concerns when the shortcut features adopted by a model are sensitive like gender or skin color \citep{wang2019balanced, xu2020investigating}.

Instead of proposing a method or solution, this work focuses on deepening our understanding of the shortcut bias phenomena. In particular, we design a dataset where multiple cues are \textit{equally valid} for solving a particular task and observe which cues tend to be preferentially adopted by DNNs.
The experimental setup is inspired by the Wisconsin Card Sorting Test (WCST, \cite{WCST}) in Cognitive Neuroscience\footnote{Original WCST gauges the subjects' cognitive ability to flexibly shift their underlying rules (adopted cues) for categorizing samples. Inability to do so may indicate dysfunctional frontal lobe activities.}. See Figure \ref{fig:setup-figure} for an illustration of the setup. It consists of a training set with multiple highly correlated cues (\eg color, shape, and scale) that offer equally plausible pathways to the successful target prediction ($Y\in\{1,2,3\}$). 
We call this a \textbf{\textit{diagonal}} training set, highlighting the spatial arrangement of such samples in the product space of all combinations. 
A model $f$ trained on such a dataset will adopt or neglect certain cues. We analyse the cues adopted by a model by observing its predictions on \textbf{\textit{off-diagonal}} samples. With regards to Figure \ref{fig:setup-figure}, for example, consider the $f$'s prediction for a small, blue triangle as an off-diagonal sample. The prediction values $f($\includegraphics[height=.8em]{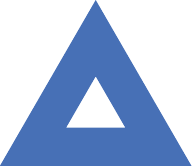}$)\in\{1,2,3\}$ tell us which cue the model is biased towards to, \eg if  $f($\includegraphics[height=.8em]{figures/setup_teaser/triangle.pdf}$)=1$, then f is biased towards scale; if $f($\includegraphics[height=.8em]{figures/setup_teaser/triangle.pdf}$)=2$, towards shape; and if $f($\includegraphics[height=.8em]{figures/setup_teaser/triangle.pdf}$)=3$ towards color. All three scenarios are plausible and only testing on off-diagonal samples will reveal model bias.
\begin{figure}[t]
    \vspace{-1em}
    \centering
    \includegraphics[width=.9\linewidth]{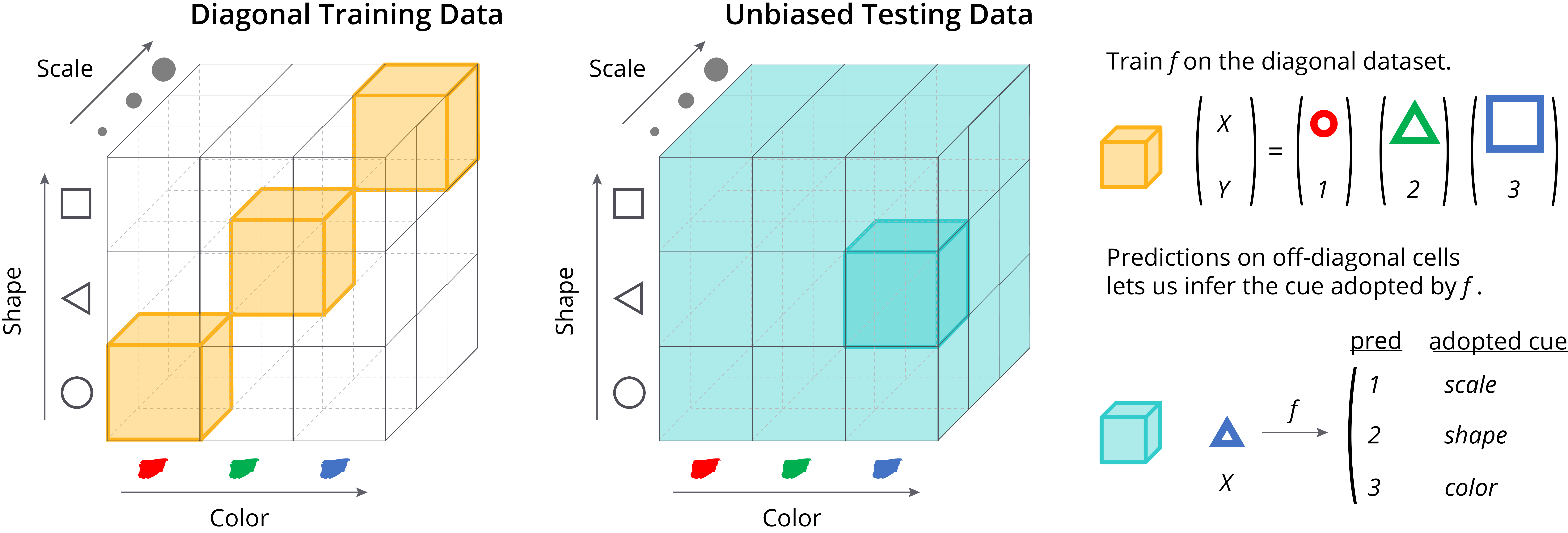}
    \caption{\small\textbf{Wisconsin Card Sorting Test for Machine Learners (\setupname).}
    The training dataset (left) poses \textit{equally plausible} cues to the DNN: color, shape, and scale. Which cue will a model $f$ choose to use, when given such training data? By examining $f$'s output on unbiased off-diagonal samples at test time (middle), one may discover the cue $f$ has adopted. In the example: 1-scale, 2-shape, and 3-color.
    }
    \label{fig:setup-figure}
    \vspace{-1em}
\end{figure}

We make important observations on the nature of shortcut bias under \setupname. We discover that, despite the equal amounts of correlations with the target label, there tends to be a preferential ordering of cues. The preference is largely shared across different DNN architectures, such as feedforward networks, ResNets \citep{kaiming_he_human_performance_imagenet}, Vision Transformers \citep{dosovitskiy2020vit}, and multiple initial parameters. From the parameter-space perspective, we further observe that the set of solutions $\Theta^{p}$ biased to the preferred cues takes a far greater volume than those corresponding to the averted cues $\Theta^{a}$. The loss landscape also tends to be flatter around $\Theta^{p}$ than around $\Theta^{a}$.

Why are certain cues preferred to others by general DNNs? We provide an explanation based on the Kolmogorov complexity of cues, which measures the minimal description length for representing cues \citep{kolmogorov_complexity}.
Prior studies have shown that in the parameter space of generic DNNs, there are exponentially more Kolmogorov-simple functions than Kolmogorov-complex ones \citep{kolmogorov_simplicity_bias_DNNs,random_dnns_simple_functions}.
Based on these theoretical results, we argue that DNNs are naturally drawn to Kolmogorov-simple cues.
We empirically verify that the preferences for cues correlate well with their Kolmogorov complexity estimates.

What are the consequences of the inborn preference for simple cues? Firstly, this may hinder the generalization of DNNs to challenging test scenarios where the simple shortcut cues are no longer valid \citep{geirhos2020shortcut}. Secondly, we expose the possibility that certain protected attributes correspond to the simple shortcut cue for the task at hand, endangering the fairness of 
DNNs \citep{fairness_in_machine_learning}. In such a case, human intervention on the learning procedure may be necessary to enforce fairness, for the dataset and DNNs can be naturally drawn to exploit protected attributes. 

The primary goal of this manuscript is to shed light on the nature of shortcut biases and the underlying mechanisms behind the scenes. Our contributions are summarized as follows: an experimental setup for studying the shortcut bias in-depth (\setupname) (\S\ref{sec:setup});
novel observations on the nature of shortcut biases, such as the existence of preferential ordering of cues and its connections to the geometry of the loss landscape in the parameter space (\S\ref{sec:observations}); an explanation based on the descriptional complexity of cues (\S\ref{sec:explanations}); and a discussion on the implications on generalization and fairness (\S\ref{sec:conclusion}), such as the preferential use of ethnical features for face recognition on the UTKFace dataset.
\section{Setup} \label{sec:setup}

We introduce the setup that will provide the basis for the analysis in this paper. We describe the procedure for building a dataset with multiple equally valid cues for recognition (\S\ref{sec:setup:data-framework}). The procedure is applied to DSprites and UTKFace datasets in \S\ref{sec:setup:datasets}. In \S\ref{sec:setup:parameter-space}, we introduce terminologies for the analysis of the parameter space and make theoretical connections with our data framework.

\subsection{data framework: \setupname}
\label{sec:setup:data-framework}

Many factors affect the preference of models to certain cues. The existence of dominant classes is an example; it encourages models to favor cues conducive to a good performance on the dominant classes \citep{fairness_in_machine_learning,fairness_without_demographics}. 
In other cases, some cues have higher degrees of correlation with the target label \citep{geirhos2020shortcut}. In this work, we test whether bias is still present under fair conditions, i.e.: when a training dataset contains a set of valid cues, each of which equally correlates with the targets, will DNNs still have a preference for certain cues? If so, why?

To study this, we introduce a data construction framework called Wisconsin Card Sorting Test for Machine Learners (\setupname), named after a clinical test in cognitive neuroscience \citep{WCST}. 
See Figure \ref{fig:setup-figure} for an overview. As a running example, we assume a dataset where each image can be described by varying two latent variables, object \emph{shape} and object \emph{color}. Let $X$ and $Y$ denote image and label, respectively. 
We write $X_{ij}$ for the image with color $i$ and shape $j$, where $i,j\in\{1,\cdots,L\}$.  When we want to consider $K>2$ varying factors, we may write $X_{i_1,\cdots,i_K}$ for the image random variable with $k^\text{th}$ factor chosen to be $i_k\in\{1,\cdots,L\}$.
Importantly, we fix the number of categories for each factor to $L$ to enforce similar conditions for all cues. 
{Similar learning setups have appeared in prior papers: ``Cross-bias generalisation'' \citep{bahng2019rebias}, ``What if multiple features are predictive?'' \citep{hermann2020neurips}, and ``Zero generalization opportunities'' \citep{DiagViB6}. While we fully acknowledge the conceptual similarities, we stress that our work presents the first dedicated study into the cue selection problem and the underlying mechanisms.}

The same set of images $\{X_{ij}\,|\,1\leq i,j\leq L\}$ admits two possible tasks: color and shape classification. The task is determined by the labels $Y$. Denoting $Y_{ij}$ as the label for image $X_{ij}$, setting $Y_{ij}=i$ leads to the color classification, and setting $Y_{ij}=j$ leads to the shape classification tasks. We may then build the data distribution for the task at hand via
\vspace{-.25em}%
\begin{align}
    \mathcal{D}_\text{color}:=\bigcup_{1\leq i,j\leq L}(X_{ij},Y_{ij}=i) \quad\quad
    \mathcal{D}_\text{shape}:=\bigcup_{1\leq i,j\leq L}(X_{ij},Y_{ij}=j)
    \label{eq:color_shape_datasets}
\end{align}
\vspace{-.25em}%
for color and shape recognition tasks, respectively. More generally, we may write
\vspace{-.25em}%
\begin{align}
    \mathcal{D}_k:=\bigcup_{1\leq i_1,\cdots,i_K\leq L}(X_{i_1,\cdots,i_K},Y_{i_1,\cdots,i_K}=i_k)
\end{align}
\vspace{-.25em}%
for the data distribution where the task is to recognize the $k^\text{th}$ cue. We define the \textbf{union} of random variables as the balanced mixture: $\bigcup_{i=1}^L Z_i:=Z_I$ where $I\sim\text{Unif}\left\{1,\cdots,L\right\}$. 

We now introduce the notion of a \textbf{diagonal dataset}, where every cue (\eg color and shape) contains all the needed information to predict the true label $Y$. That is, a perfect prediction for either color or shape attribute leads to a 100\% accuracy for the task at hand. This can be done by letting the factors always vary together $i=j$ in the dataset (and thus the name). We write
\vspace{-.25em}%
\begin{align}
    \mathcal{D}_\text{diag}:=\bigcup_{1\leq i\leq L}(X_{ii},Y_{ii}=i).
\end{align}
\vspace{-.25em}%

Such a dataset completely leaves it to the model to choose the cue for recognition. Given a model $f$ trained on $\mathcal{D}_\text{diag}$, we analyse the recognition cue adopted by $f$ by measuring its \textbf{unbiased accuracy} on all the cells (See Figure \ref{fig:setup-figure}). There are $K$ different unbiased accuracies for each task, depending on how the off-diagonal cells are labelled: \eg $\mathcal{D}_\text{color}$ and $\mathcal{D}_\text{shape}$ in \eqref{eq:color_shape_datasets}. For a general setting with $K$ cues, the unbiased accuracy for $k^\text{th}$ cue is defined as
\vspace{-.25em}%
\begin{align}
    \text{acc}_k(f):=\frac{1}{L^k}\sum_{i_1,\cdots,i_K}\text{Pr}\left[f(X_{i_1,\cdots,i_K})=i_k\right].
\end{align}
\vspace{-.25em}%
\begin{proposition}
For $k\in\{1,\cdots,K\}$, $\text{acc}_k(f)=1$ if and only if $f(X_{i_1,\cdots,i_K})=i_k$ almost surely for all $1\leq i_1,\cdots,i_K\leq L$. Moreover, if the condition above holds (\ie $f$ is perfectly biased to cue $k$), then $\text{acc}_m(f)=\frac{1}{L}$ for all $m\neq k$.
\label{prop:unbiased_acc}
\end{proposition}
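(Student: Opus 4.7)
The plan is to prove the three claims by a direct computation, since everything follows from the definition of $\text{acc}_k(f)$ as an average of probabilities in $[0,1]$. I would first note the trivial but useful fact that each summand $\Pr[f(X_{i_1,\ldots,i_K})=i_k]$ lies in $[0,1]$, and that the sum is over $L^K$ tuples, so $\text{acc}_k(f)\in[0,1]$.

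For the ``iff'' statement, the ``if'' direction is immediate: if $f(X_{i_1,\ldots,i_K})=i_k$ almost surely for every tuple, then each term in the sum equals $1$, giving $\text{acc}_k(f)=L^K/L^K=1$. For the ``only if'' direction, I would use the standard argument that an average of $[0,1]$-valued quantities equals $1$ precisely when every quantity equals $1$; hence $\Pr[f(X_{i_1,\ldots,i_K})=i_k]=1$ for all $(i_1,\ldots,i_K)$, which is exactly the a.s.\ condition.

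For the ``moreover'' part, I would substitute the a.s.\ equality $f(X_{i_1,\ldots,i_K})=i_k$ into the definition of $\text{acc}_m(f)$ for $m\neq k$. This turns each probability into the indicator $\mathbf{1}[i_k=i_m]$, so
\begin{align*}
\text{acc}_m(f)=\frac{1}{L^K}\sum_{i_1,\ldots,i_K}\mathbf{1}[i_k=i_m].
\end{align*}
A combinatorial count finishes the job: since $k\neq m$, fixing the common value $i_k=i_m$ gives $L$ choices, and the remaining $K-2$ indices range freely over $\{1,\ldots,L\}$, contributing $L^{K-2}$ tuples each. The total count is $L\cdot L^{K-2}=L^{K-1}$, and dividing by $L^K$ yields $\text{acc}_m(f)=1/L$.

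There is no real obstacle here; the only subtlety worth flagging is the implicit assumption that the index normalization in the definition of $\text{acc}_k(f)$ is $L^K$ (the total number of tuples) rather than $L^k$ as literally written, which I take to be a typo. Everything else is a one-line counting argument, so the proof should be a short paragraph in the final manuscript.
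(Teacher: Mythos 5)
Your proof is correct and complete; the paper actually states Proposition \ref{prop:unbiased_acc} without any proof, and your direct computation (average of $[0,1]$ terms equals $1$ iff each term equals $1$, followed by the $L^{K-1}/L^K$ count for the ``moreover'' part) is exactly the intended argument. You are also right to flag the $L^k$ normalization in the definition of $\text{acc}_k(f)$ as a typo for $L^K$ --- without that correction the ``if'' direction would give $L^{K-k}$ rather than $1$, so the fix is necessary, not cosmetic.
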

The proposition implies that the unbiased accuracy is capable of detecting the bias in a model $f$: $\text{acc}_k(f)=1$ implies that $f$'s prediction is solely based on the cue $k$. It also emphasizes that it is impossible for a model to be perfectly biased to multiple cues.
Finally, we remark that the \setupname analysis does not require the cues to be orthogonal or interpretable to humans. The only requirement is the availability of the labelled samples $(X_{i_1,\cdots,i_K},Y_{i_1,\cdots,i_K})$ for the cue of interest.

\subsection{datasets for analysis} \label{sec:setup:datasets}

\textbf{DSprites}
\citep{dsprites17} is an image dataset of symbolic objects like triangles, squares, and ellipses on black background. It consists of images with \textbf{all} possible combinations of five factors: shape, scale, orientation, and X-Y position. The total number of combinations is $3\times 6\times 40\times 32\times 32=737,280$. We augment the dataset with another axis of variation: 4 colors (white, red, green, and blue). This results in $737,280\times 4=2,949,120$ images in the augmented dataset. Each image is of resolution $64\times 64$. We have identified degenerate orientation labels due to rotational symmetries for certain shapes. We process the data further by collapsing the symmetries to a unified orientation label. 
For example, we assign the same orientation label for squares at rotations $0$, $\frac{\pi}{2}$, $\pi$, and $\frac{3\pi}{2}$. 

\textbf{UTKFace}
\citep{zhang2017age} is a face dataset with $33,488$ images with exhaustive annotations of age, gender, and ethnicity per image. We utilize the \textit{Aligned and Cropped Face} version to focus the variation in the facial features only. Ages range from 0 to 116; gender is either male or female; and ethnicity is one of White, Black, Asian, Indian, and Others (e.g. Hispanic, Latino, Middle Eastern) following \citet{zhang2017age}. Images are resized from $224\times 224$ to $64\times 64$.

\paragraph{Applying the data framework.}
For each analysis, we select a subset of features $\mathbb{S}$ and build the training set $\mathcal{D}_\text{diag}$ and test sets $\mathcal{D}_k$ (\S\ref{sec:setup:data-framework}) anew based on $\mathbb{S}$. Figure \ref{fig:dataset} shows examples from DSprites with $\mathbb{S}=\{\text{shape},\text{color}\}$ and UTKFace with $\mathbb{S}=\{\text{age},\text{ethnicity}\}$, respectively.
To enforce the same number of classes $L$ per feature, we set $l$ as the minimal number of classes among the selected features $\mathbb{S}$. For features with $\#\text{classes}>L$, we sub-sample the classes to match $\#\text{classes}=L$. 
For continuous features like age and rotation, we build categories based on intervals, defined such that the $L$ classes are balanced. We randomly vary cues $\notin\mathbb{S}$ in the constructed datasets $\mathcal{D}_\text{diag}$ and $\mathcal{D}_k$. Figure \ref{fig:dataset} shows the example datasets of $\mathcal{D}_\text{diag}$ and $\mathcal{D}_k$ for each dataset.

\begin{figure}
    \vspace{-1em}
	\centering
	\begin{subfigure}[b]{.45\textwidth}
	\centering
		\includegraphics[width=.8\linewidth]{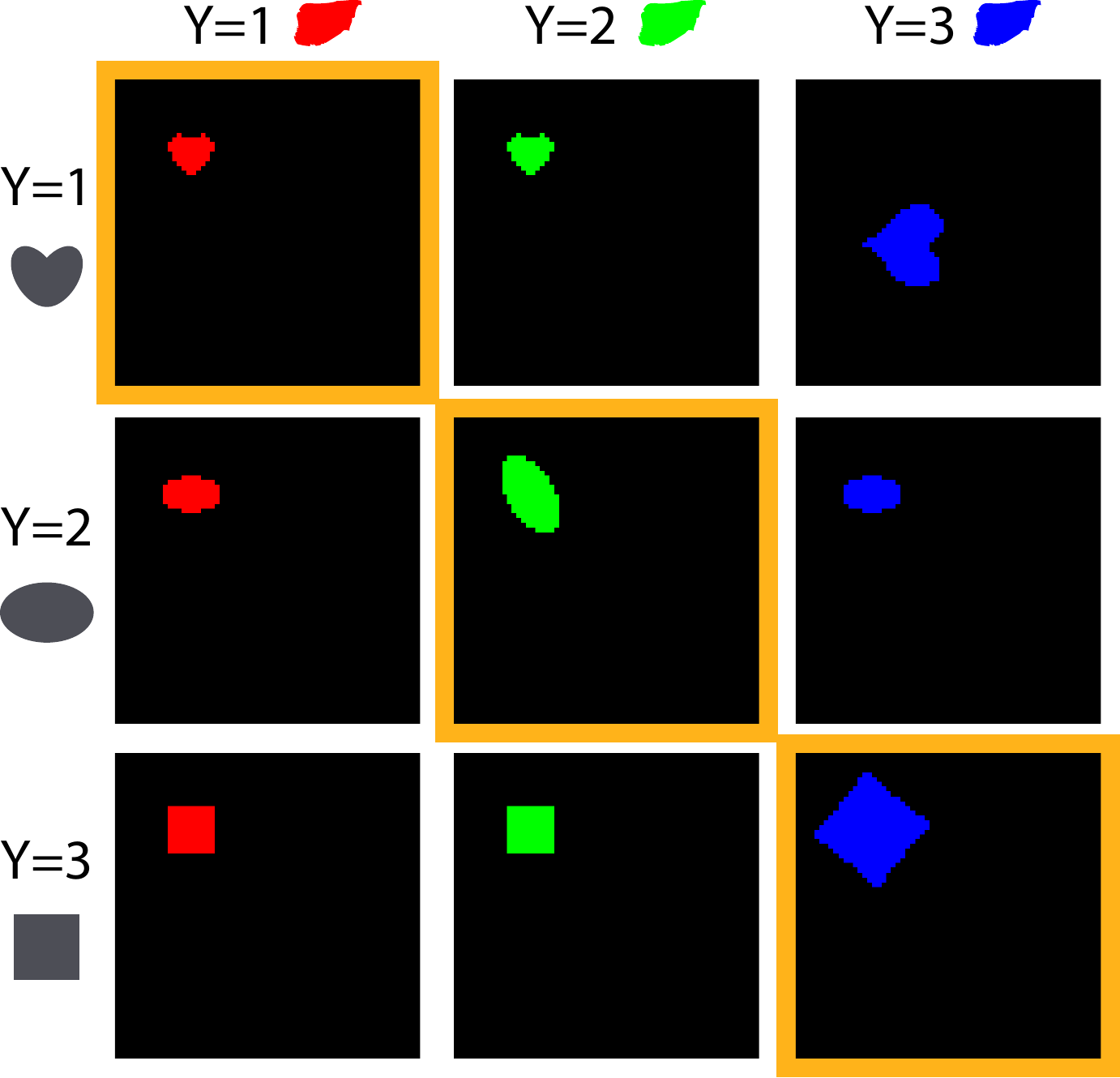}
	\end{subfigure}%
	\begin{subfigure}[b]{.45\textwidth}
	\centering
		\includegraphics[width=.8\linewidth]{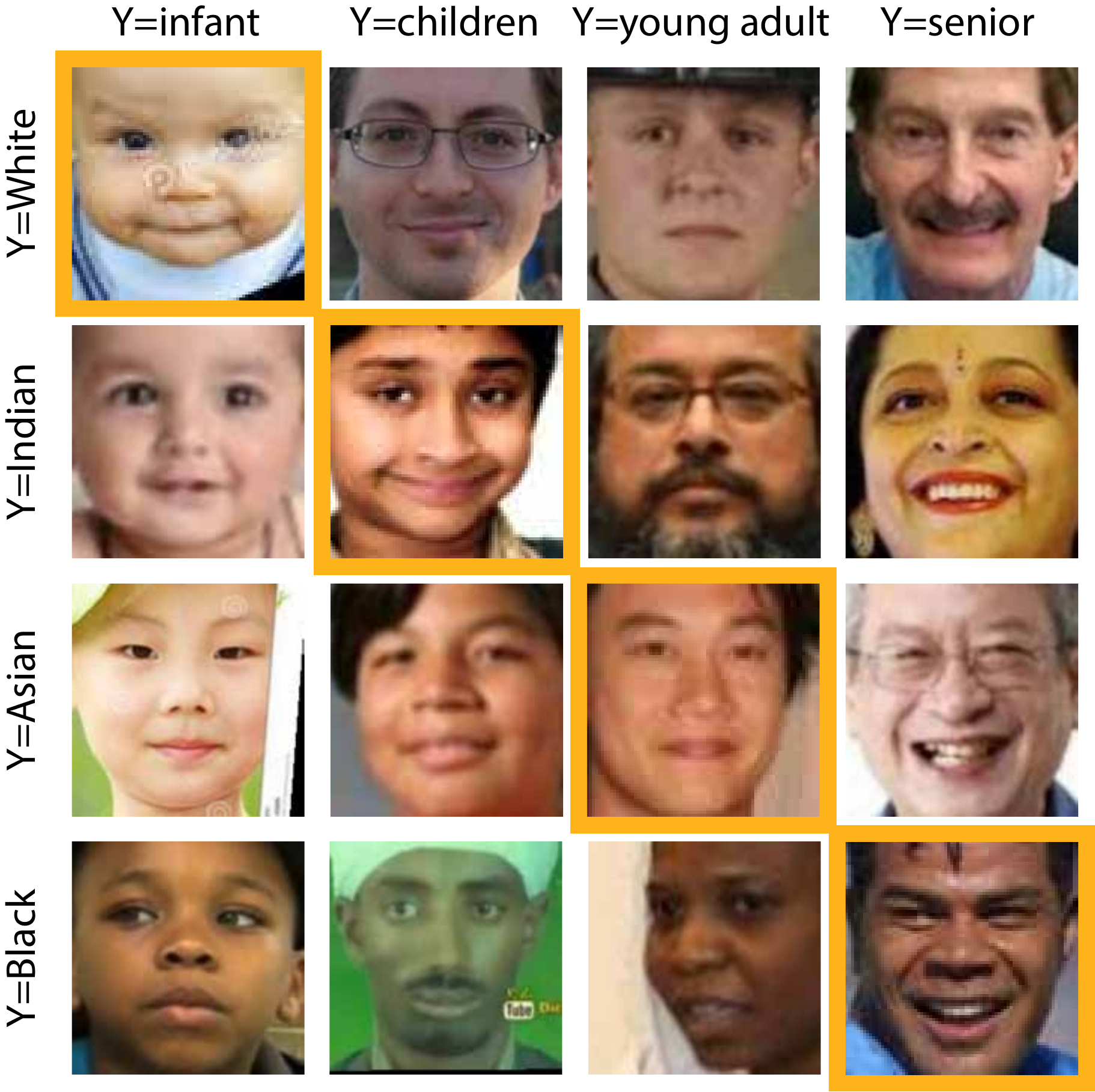}
	\end{subfigure}
	\caption{\small\textbf{Dataset samples.} The same set of images for each dataset accommodates two tasks, depending on the labeling scheme (row-wise or column-wise). The diagonal dataset $\mathcal{D}_\text{diag}$ (indicated with borders of color $\color{diagonalorange}\blacksquare$) is identical regardless of the row- or column-wise labeling. Left: DSprites. Right: UTKFace.}
	\label{fig:dataset}
	\vspace{-1em}
\end{figure}

\subsection{parameter space and solutions}
\label{sec:setup:parameter-space}

We define the hypothesis class for DNNs through their weights and biases, collectively written as $\theta\in\Theta$ (\eg all possible weight and bias values of the ResNet50 architecture \citep{he2016deep}). We will use a non-negative loss $\mathcal{L}\geq 0$ to characterize the fit of the model $f_\theta$ on data $\mathcal{D}$. 0-1 loss (and cross-entropy loss) is such an example:
\begin{align}
    \mathcal{L}(\theta;\mathcal{D})=\text{Pr}_{(X,Y)\sim\mathcal{D}}[f_\theta(X)\neq Y].
    \label{eq:01loss}
\end{align}
We define the \textbf{solution set} as $\Theta^\star:=\{\theta\in\Theta:\mathcal{L}(\theta;\mathcal{D})=0\}$; in practice we define it with a small threshold $\mathcal{L}(\theta;\mathcal{D})\leq \epsilon$.

We now make a connection between the datasets and solution sets. Assume that we have a dataset of form $(X_{ij},Y_{ij})$ where $Y_{ij}=i$ (the color task; see \S\ref{sec:setup:data-framework}). The corresponding solution set is defined as $\Theta_{ij}^{Y_{ij}}:=\{\theta:\mathcal{L}(\theta;(X_{ij},Y_{ij}=i))=0\}$. Define it similarly for the shape task ($Y_{ij}=j$).
\begin{proposition}
If $\mathcal{D}=\bigcup_{(i,j)\in I}(X_{ij},Y_{ij})$ for some index set $I$, then the corresponding solution set is the intersection of the solutions sets $\bigcap_{(i,j)\in I} \Theta_{ij}^{\star Y_{ij}}$.
\label{prop:solution_set_structure}
\end{proposition}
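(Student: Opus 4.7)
The plan is to unfold the definition of the union of random variables as a balanced mixture, show that this makes the loss on $\mathcal{D}$ an average (or more generally, a nonnegative linear combination) of the per-cell losses, and then invoke nonnegativity to conclude that vanishing of the total loss is equivalent to simultaneous vanishing of all per-cell losses. That simultaneous vanishing is exactly the statement that $\theta$ lies in every $\Theta_{ij}^{\star Y_{ij}}$, i.e.\ in the intersection.

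More concretely, I would first recall from \S\ref{sec:setup:data-framework} that $\bigcup_{(i,j)\in I}(X_{ij},Y_{ij})$ denotes the mixture in which a pair is sampled by first drawing the index $(i,j)$ uniformly from $I$ and then returning $(X_{ij},Y_{ij})$. Plugging this into the definition of the $0$--$1$ loss in \eqref{eq:01loss} and conditioning on the index yields
\begin{align*}
\mathcal{L}(\theta;\mathcal{D})
= \frac{1}{|I|}\sum_{(i,j)\in I}\Pr\!\left[f_\theta(X_{ij})\neq Y_{ij}\right]
= \frac{1}{|I|}\sum_{(i,j)\in I}\mathcal{L}\!\left(\theta;(X_{ij},Y_{ij})\right).
\end{align*}

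Next, using the assumption $\mathcal{L}\geq 0$ stated in \S\ref{sec:setup:parameter-space}, a sum of nonnegative terms equals zero if and only if each term is zero. Hence $\mathcal{L}(\theta;\mathcal{D})=0$ iff $\mathcal{L}(\theta;(X_{ij},Y_{ij}))=0$ for every $(i,j)\in I$, which by the definition of $\Theta_{ij}^{\star Y_{ij}}$ is precisely the condition $\theta\in\bigcap_{(i,j)\in I}\Theta_{ij}^{\star Y_{ij}}$. This yields the two-sided inclusion and therefore the equality.

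I do not anticipate a real obstacle here; the proof is essentially a bookkeeping argument. The only subtlety worth flagging is that the argument uses both (i) the definition of the union as a uniform mixture (so that the total loss decomposes as a convex combination of per-cell losses) and (ii) the nonnegativity of $\mathcal{L}$ — the statement would fail for a signed loss where cancellations are possible. For the version with a small tolerance $\mathcal{L}(\theta;\mathcal{D})\leq\epsilon$ mentioned right before the proposition, the clean intersection characterization must be relaxed: one would only conclude that $\theta$ lies in $\bigcap_{(i,j)\in I}\{\theta:\mathcal{L}(\theta;(X_{ij},Y_{ij}))\leq |I|\,\epsilon\}$, which is a minor caveat but worth stating.
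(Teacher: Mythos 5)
Your proof is correct and takes essentially the same route as the paper's: decompose the loss on the balanced mixture into the average $\frac{1}{|I|}\sum_{(i,j)\in I}\mathcal{L}(\theta;(X_{ij},Y_{ij}))$ and use nonnegativity to conclude that the average vanishes iff every term does, giving the intersection. Your closing remark about how the $\mathcal{L}\leq\epsilon$ relaxation weakens the intersection characterization is a reasonable extra observation that the paper does not include.
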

\begin{proof}
If $\mathcal{L}$ never attains 0, the solution set is always empty and the statement is vacuously true. Otherwise, we may decompose $\mathcal{L}(\theta)$ into the average $\frac{1}{|I|}\sum_{(i,j)\in I}\mathcal{L}_{ij}(\theta)$. Due to the non-negativity of the loss function, $\mathcal{L}(\theta)=0$ if and only if $\mathcal{L}_{ij}(\theta)=0$ for all $(i,j)\in I$. 
\end{proof}
The proposition leads to a neat summary of the relations among solution sets like $\Theta_\text{color}:=\{\theta:\mathcal{L}(\theta;(X_{ij},i))=0\}$ and $\Theta_\text{shape}$ defined similarly. 
\begin{corollary}
$\Theta_\text{color}\subset \Theta_\text{diag}$ and $\Theta_\text{shape}\subset \Theta_\text{diag}$. Moreover, $\Theta_\text{color}\bigcap \Theta_\text{shape}=\emptyset$.
\end{corollary}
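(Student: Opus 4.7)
My plan is to apply Proposition \ref{prop:solution_set_structure} to express each of $\Theta_\text{color}$, $\Theta_\text{shape}$, and $\Theta_\text{diag}$ as an intersection of per-sample solution sets $\Theta_{ij}^{\star Y_{ij}}$, and then use monotonicity of intersection under enlarging index sets for the inclusions, and Proposition \ref{prop:unbiased_acc} for the disjointness.

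For the inclusions, I would first write $\Theta_\text{color}=\bigcap_{1\leq i,j\leq L}\Theta_{ij}^{\star,i}$ and $\Theta_\text{diag}=\bigcap_{1\leq i\leq L}\Theta_{ii}^{\star,i}$ via Proposition \ref{prop:solution_set_structure}. Since the diagonal index set $\{(i,i):1\leq i\leq L\}$ is a subset of the full grid $\{(i,j):1\leq i,j\leq L\}$, intersecting over a larger collection yields a smaller (or equal) set, hence $\Theta_\text{color}\subset \Theta_\text{diag}$. Note that on the diagonal the color label $i$ and the shape label $j=i$ coincide, so $\Theta_{ii}^{\star,i}$ is unambiguous and the same inclusion argument gives $\Theta_\text{shape}\subset \Theta_\text{diag}$.

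For the disjointness $\Theta_\text{color}\cap\Theta_\text{shape}=\emptyset$, I would argue by contradiction assuming $L\geq 2$ (which is the regime of interest; when $L=1$ the color and shape tasks are indistinguishable and the statement is vacuous). Suppose $\theta$ lies in both sets. Then $\mathcal{L}(\theta;\mathcal{D}_\text{color})=0$ forces $f_\theta(X_{ij})=i$ almost surely for all $i,j$, so by Proposition \ref{prop:unbiased_acc} we have $\text{acc}_\text{color}(f_\theta)=1$ and hence $\text{acc}_\text{shape}(f_\theta)=\tfrac{1}{L}<1$. But membership in $\Theta_\text{shape}$ would require $\text{acc}_\text{shape}(f_\theta)=1$, a contradiction. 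Alternatively one can observe directly that $f_\theta(X_{ij})$ cannot equal both $i$ and $j$ simultaneously when $i\neq j$.

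The only real subtlety is the almost-sure qualifier arising from the probabilistic definition of the 0-1 loss in \eqref{eq:01loss}; once one interprets $\mathcal{L}(\theta;\mathcal{D})=0$ as $f_\theta(X)=Y$ with probability one under $\mathcal{D}$, each step is immediate. I would spell this out explicitly once and then quote Propositions \ref{prop:unbiased_acc} and \ref{prop:solution_set_structure} to compress the rest. No new machinery is required.
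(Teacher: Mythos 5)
Your proposal is correct and follows essentially the same route as the paper: the inclusions come from Proposition \ref{prop:solution_set_structure} together with the fact that the diagonal samples form a sub-collection of the full grid (so the corresponding intersection of per-sample solution sets is larger), and the disjointness comes from the observation that $f_\theta(X_{ij})$ cannot equal both $i$ and $j$ when $i\neq j$ --- which is the paper's one-line argument and also the substance behind your detour through Proposition \ref{prop:unbiased_acc}. Your explicit remarks about the almost-sure qualifier and the $L\geq 2$ regime are reasonable clarifications the paper leaves implicit, but they do not change the argument.
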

\begin{proof}
The first statement immediately follows from the proposition above and the fact that $\mathcal{D}_\text{diag}=\mathcal{D}_\text{color}\bigcap\mathcal{D}_\text{shape}$. For the second, observe that there is no function that satisfies $f_\theta(X_{ij})=i$ and $f_\theta(X_{ij})=j$ at the same time for $i\neq j$.
\end{proof}
\begin{wrapfigure}{r}{0.18\linewidth}
    \centering
    \vspace{-2em}
    \includegraphics[width=.95\linewidth]{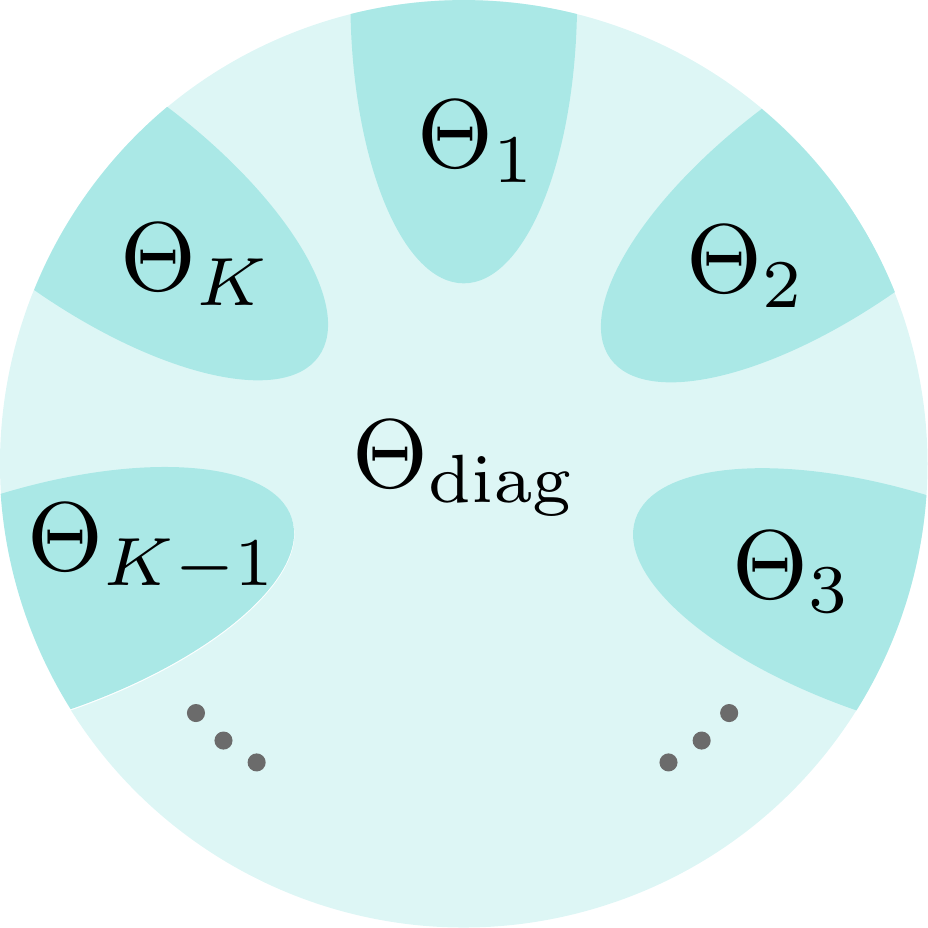}
    \vspace{-3em}
\end{wrapfigure}
We may easily extend the result to datasets with $K$ cues. Given the solutions sets $\Theta_k$ for each cue $1\leq k\leq K$, we have $\Theta_k\subset \Theta_\text{diag}$ for all $k$ and $\{\Theta_k\}_k$ are pairwise disjoint (diagram on the right). This follows from Proposition \ref{prop:unbiased_acc} that a single model cannot achieve low error for multiple cues ($\mathcal{D}_k$) at the same time.

\section{Observations} 
\label{sec:observations}

Based on the proposed \setupname, we study the natural tendencies of deep neural networks (DNNs) to favor certain cues over the others. We also present a parameter-space view that hints to the underlying mechanisms for the cue preference.

\subsection{preferential ordering of cues} 
\label{sec:observations:preferential-ordering}

In the first set of experiments, we observe which cues are preferentially used by the network when training on a dataset composed of an equally represented set of cues: the \emph{diagonal dataset} $\mathcal{D}_\text{diag}$ in \S\ref{sec:setup:data-framework}. We train three qualitatively different types of DNNs: (1) Feed Forward neural network (FFnet), (2) ResNet with depth 20 (ResNet20), and (3) Vision Transformer (ViT). Details of the architectures and training setups can be found in Appendix \S\ref{appendix:sec:model_architectures}. The results are summarized in Figure \ref{fig:preference}. 

\paragraph{Models adopt cues with uneven likelihood.}
Figure \ref{fig:preference} shows the unbiased accuracies of the models when trained on the diagonal sets for each dataset (DSprites and UTKFace) with the involved cues denoted as $\sS$. We report the diagonal accuracy and the $K$ unbiased accuracies for each cue in $\sS$ on the held-out test set. The number of labels $L$ is 3 for DSprites and 2 for UTKFace.
On DSprites, we observe that for all architectures, color is by far the most preferred cue (unbiased accuracy near 100\%). For FFnet and ViT, the extreme color bias forces the accuracy on other cues to be $\frac{1}{L}=33.3\%$ as predicted by Proposition \ref{prop:unbiased_acc}. ResNet20 behaves less extremely and picks up scale, shape, and orientation, in the order of preference.
On UTKFace, there is no extreme dominant cue, but there is a general preference to use ethnicity cues for making predictions, followed by gender and age cues. The trend is clear for all models, and while the FFnet model shows a slightly more variable performance across runs, the ranking of cues is preserved throughout.
The preference of ethnicity echoes the DNN's preference of color as a strong cue in DSprites; \textit{DNNs are drawn to utilizing skin color}.
Another curious phenomenon is that none of the three cues sufficiently explains the 100\% diagonal accuracy, suggesting the existence of an unlabeled shortcut cue other than the three considered. Finally, when the most dominant cue is left out, other cues activate in its place: scale for DSprites and gender for UTKFace (Appendix \S\ref{appendix:sec:cues-ordering}).

\paragraph{Consistency of preferences.}
We have observed a consistent preferential ordering of cues shown by qualitatively different types of architectures. The results suggest the existence of a common denominator that depends solely on the nature of the cues, rather than the architectures. We finally remark that the experiments were run 10 times with different random initialization; error bars in Figure \ref{fig:preference} show $\pm \sigma$ standard deviation. The small variability observed across most training shows the consistency of the preferential ranking with respect to initialization. For a pair of cues, we refer to the one that is more likely to be chosen by a model trained on $\mathcal{D}_\text{diag}$ as the \textbf{preferred cue} and the other as the \textbf{averted cue}.

\begin{figure}
    \centering
    \footnotesize
    \setlength{\tabcolsep}{.2em}
    \vspace{-2em}
        \begin{tabular}{ccccc}
        \rotatebox{90}{\hspace{4em}\textbf{DSprites}}\hspace{1em}
        & \rotatebox{90}{${\scriptsize\mathbb{S}= \begin{Bmatrix}\text{shape},\text{scale}\\ \text{orientation},\text{color}\end{Bmatrix}}$}\hspace{.5em}
    	&\includegraphics[width=.26\linewidth]{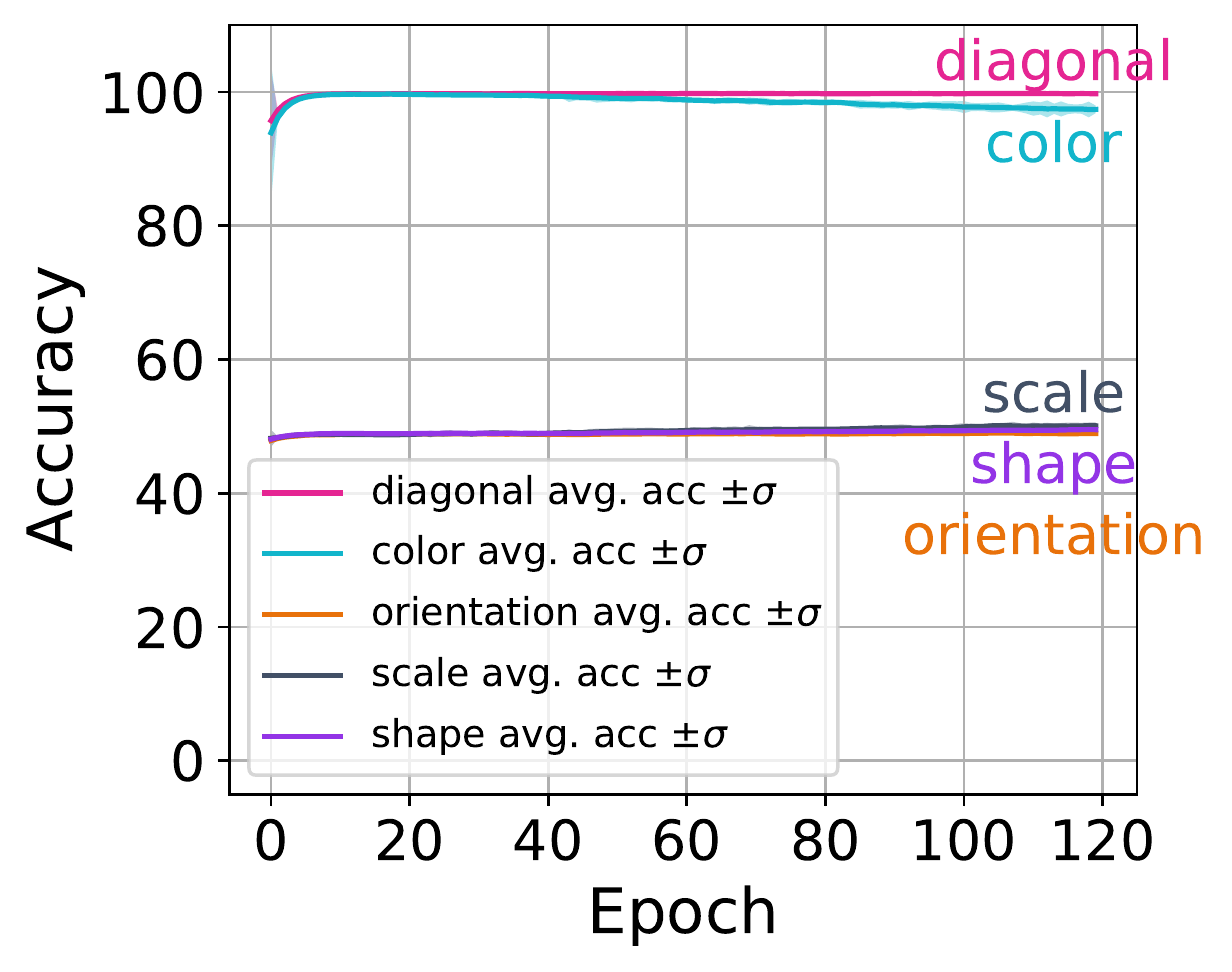}
    	&\includegraphics[width=.26\linewidth]{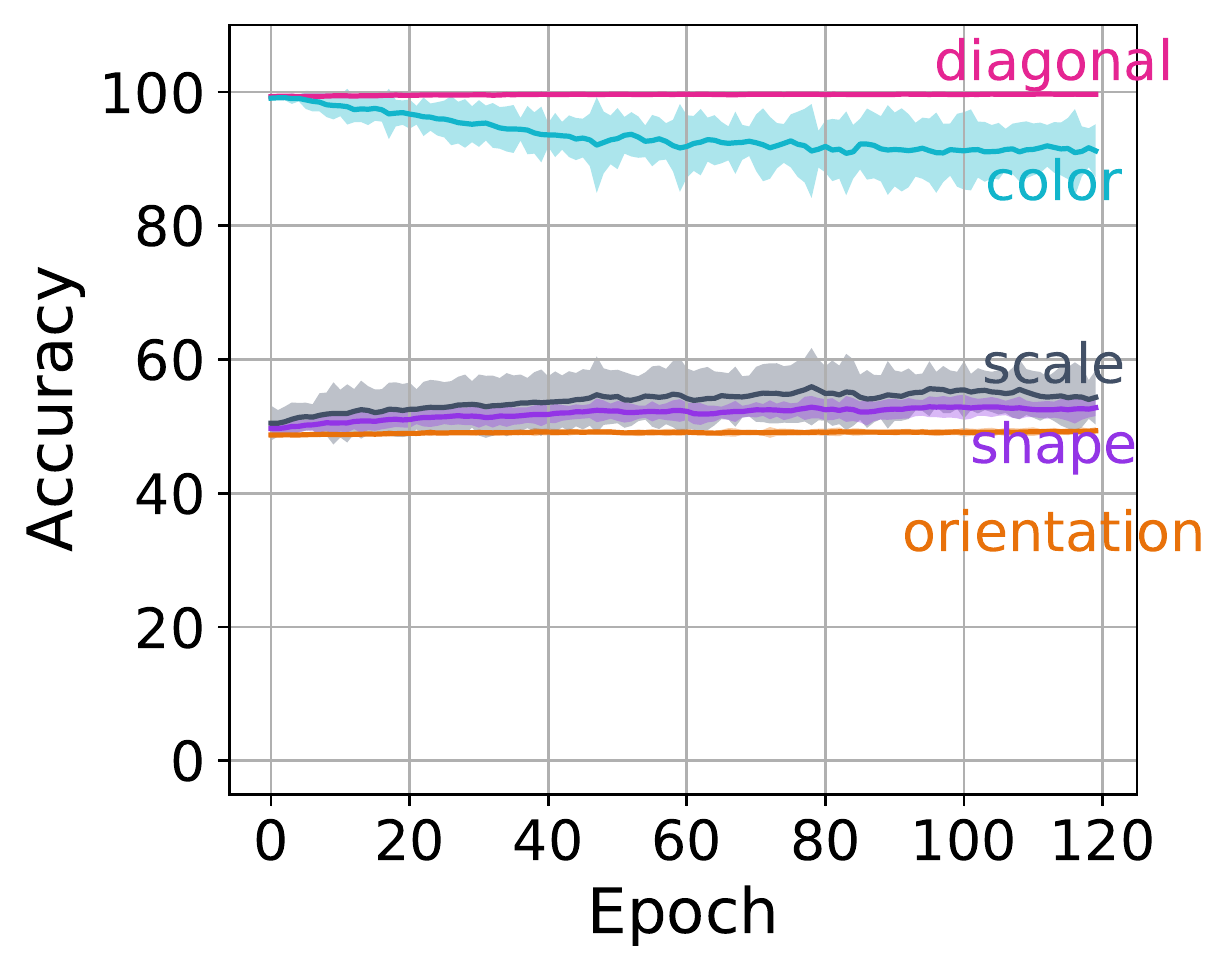}
    	&\includegraphics[width=.26\linewidth]{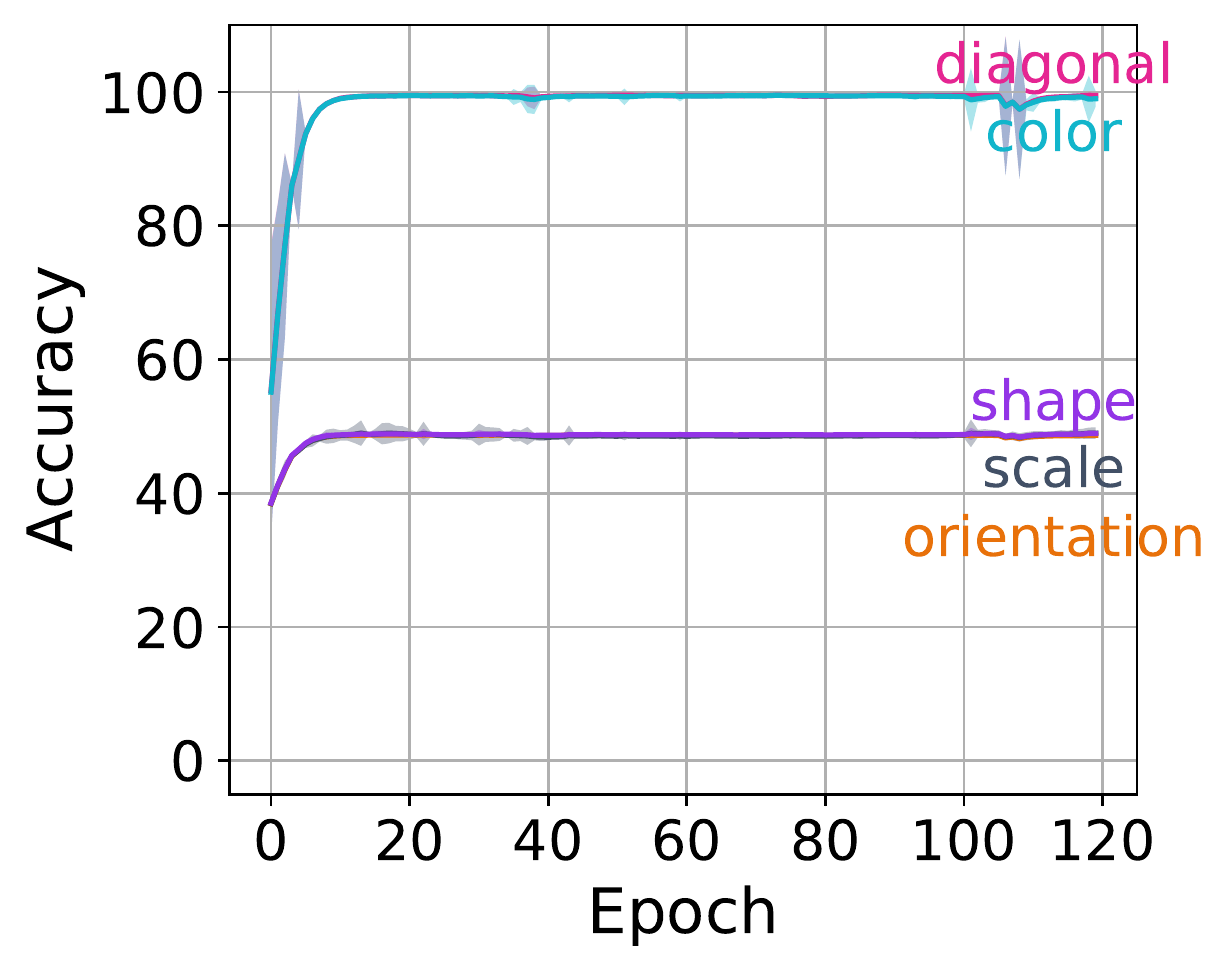}\\
        \end{tabular}
        \begin{tabular}{ccccc}
        \rotatebox{90}{\hspace{3em}\textbf{UTKFace}}\hspace{1em}
        & \rotatebox{90}{${\scriptsize\mathbb{S}= \begin{Bmatrix}\text{ethnicity}\\ \text{gender},\text{age}\end{Bmatrix}}$}\hspace{.5em}
    	&\includegraphics[width=.26\linewidth]{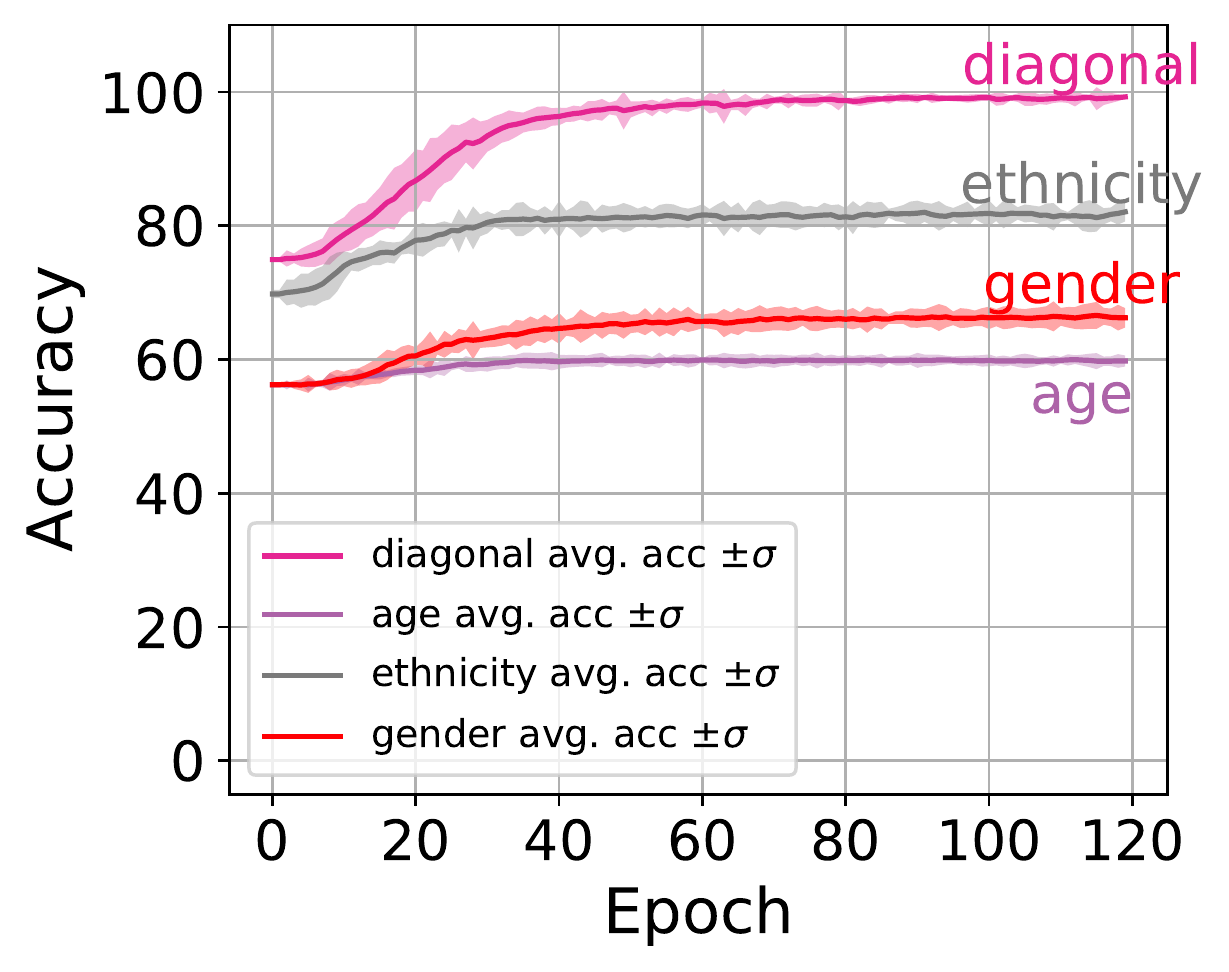}
    	&\includegraphics[width=.26\linewidth]{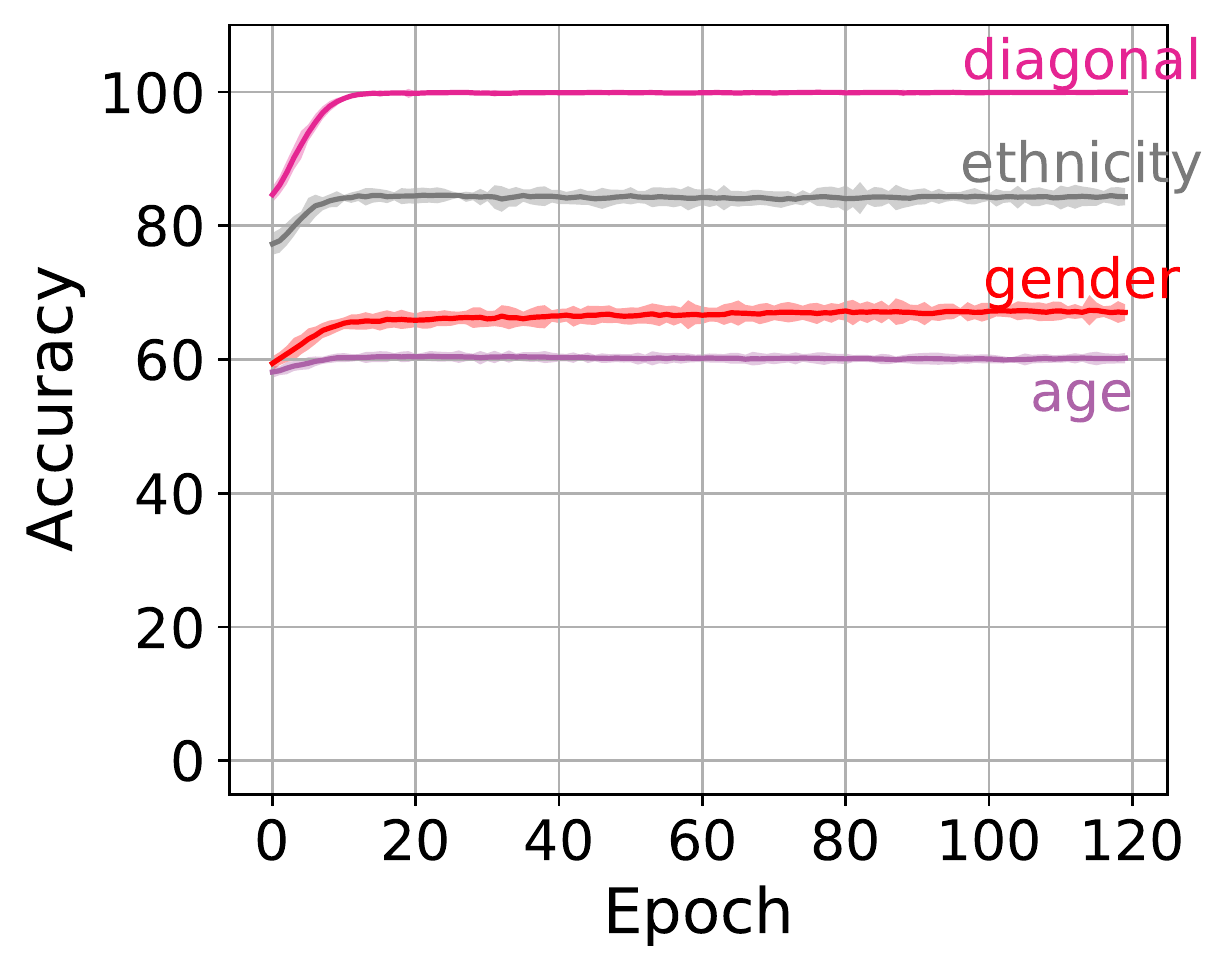}
    	&\includegraphics[width=.26\linewidth]{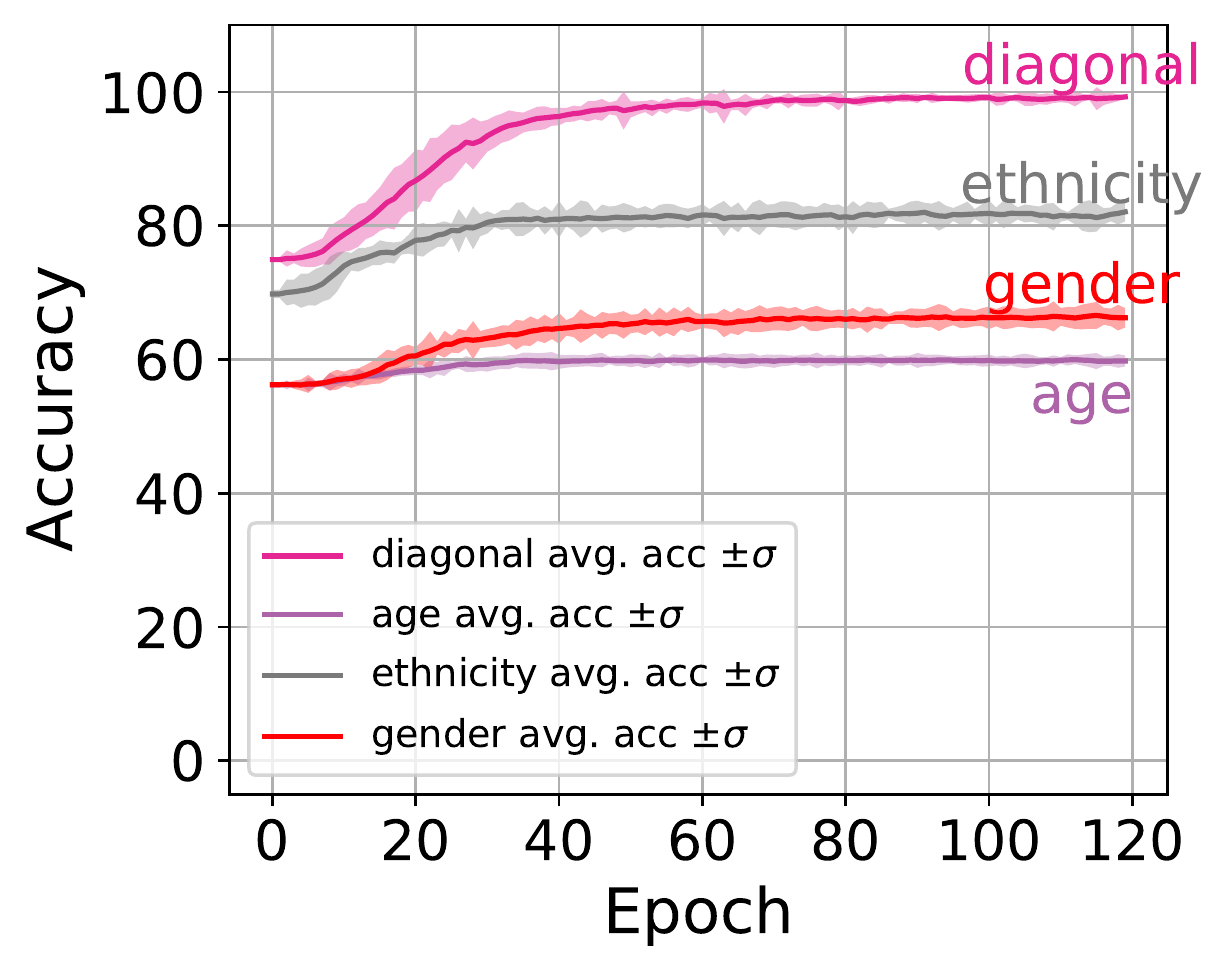}\\
    	&& \textbf{FFnet} & \textbf{ResNet20} & \textbf{ViT}
        \end{tabular}
    \vspace{-.5em}
	\caption{
	\small\textbf{Preferential ordering of cues.}
	Diagonal and unbiased accuracies of models trained on the diagonal training set $\mathcal{D}_\text{diag}$ composed of the cues defined by $\mathbb{S}$ and tested on the off-diagonal sets $\mathcal{D}_{k}$ where $k\in\mathbb{S}$.
	}
	\label{fig:preference}
\end{figure}

\begin{figure}
    \centering
	\includegraphics[width=.9\linewidth]{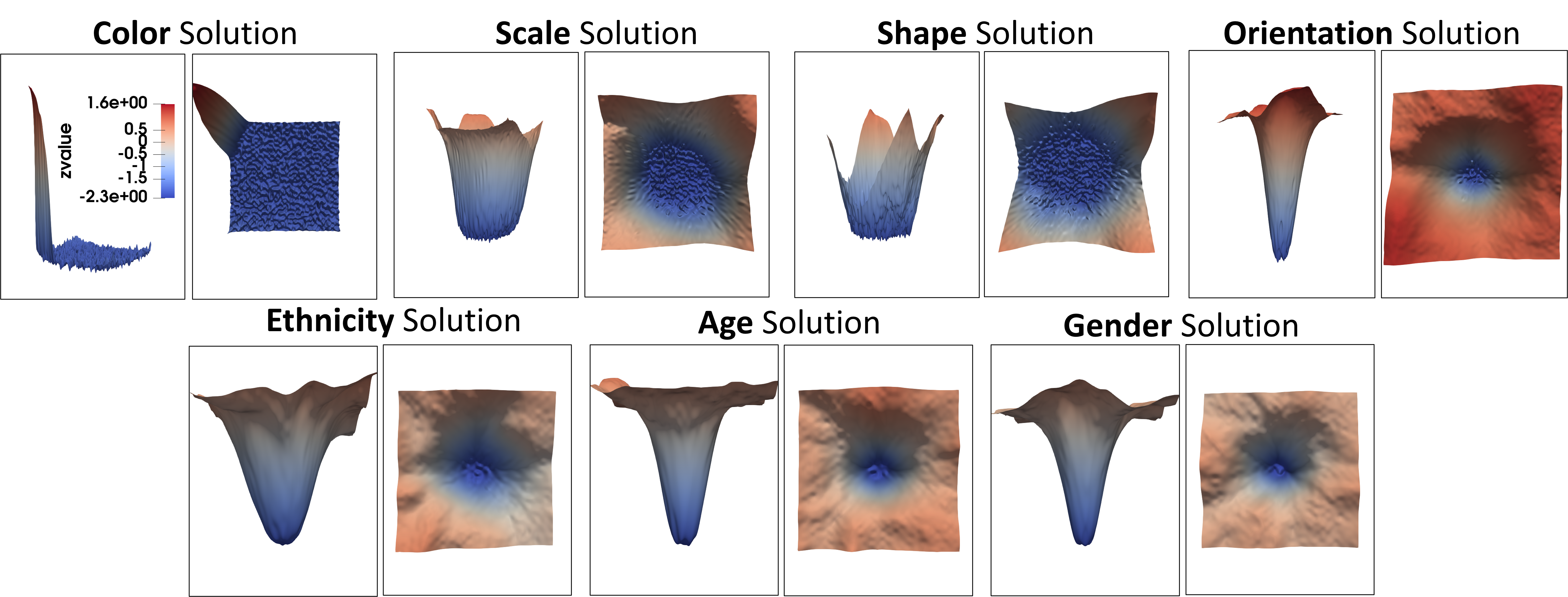}
	\caption{\small\textbf{3-D visualisation of loss surface around different solution types $\Theta^\star$.}} For each cue, we show 3-D plots of the same loss surface (side and top views). $X$ and $Y$ axes are randomly chosen directions in the parameter space; $Z$ axis shows the loss value at each $(X,Y)$ measured against the diagonal data $\mathcal{D}_\text{diag}$. All the surfaces are shown at the same scale.
	\label{fig:dsprites:minima_surf}
    \vspace{-1.5em}
\end{figure}

\subsection{the abundance of preferred-cue solutions}
\label{sec:observations:abundance}

Why are certain cues preferred to others? We look for the answers in the parameter space. More specifically, we study the geometric properties of the loss function and the corresponding solution sets in the parameter space. The section is based on  \S\ref{sec:setup:parameter-space}. We write the solution set corresponding to the preferred cues as \textbf{preferred solutions} $\Theta_p$ and averted cues as \textbf{averted solutions} $\Theta_a$. 

\paragraph{How to find the averted solutions $\theta\in\Theta_a$?}
Naively training a DNN on $\mathcal{D}_\text{diag}$ will most likely result in a preferred solution $\Theta_p$. We thus use the full dataset $\mathcal{D}_k$ corresponding to the averted cue $k$. (\S\ref{sec:setup:data-framework}) as the training set to find an averted solution $\theta\in\Theta_a$. To ensure a fair comparison in terms of the amount of training data, we also use the full dataset $\mathcal{D}_k$ for the preferred cues.

\paragraph{Qualitative view on the loss surface.}
We examine the geometric properties of the loss surface for ResNet20 around different solutions types (preferred and averted). We use the 2D visualization tool for loss landscape by \cite{li2017visualizing}. The X-Y grid is spanned by two random vectors in $\sR^D$ where $D$ is the number of parameters. In Figure \ref{fig:dsprites:minima_surf}, we observe that the preferred solutions (\eg $\Theta_\text{color}$) are characterized with a flatter and wider surface in the vicinity. In contrast, local surface around averted (\eg $\Theta_\text{orientation}$) solutions exhibit relatively sharper and narrower valleys. A similar trend is observed for the UTKFace cues at a milder degree: $\Theta_\text{ethnicity}$ is a slightly flatter solution than $\Theta_\text{age}$.

\begin{wrapfigure}{r}{0.18\linewidth}
    \centering
    \vspace{-1.5em}
    \includegraphics[width=.95\linewidth]{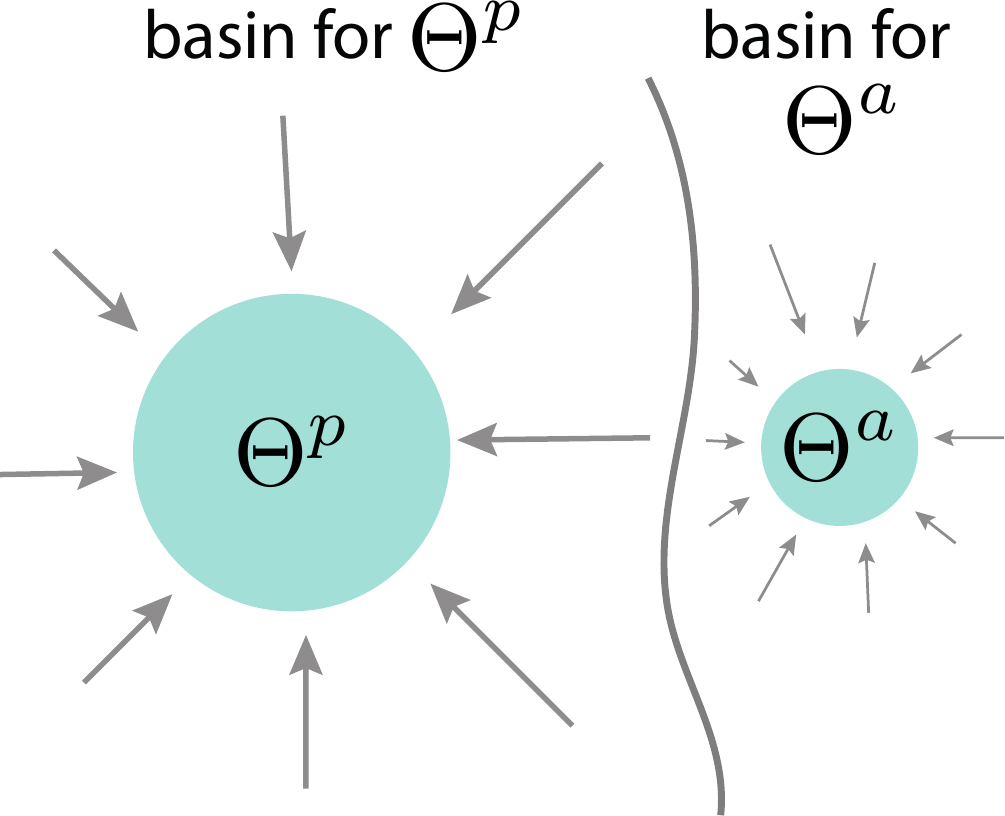}
    \vspace{-1em}
\end{wrapfigure}
\paragraph{Size of the basin of attraction.}
We compare the \textbf{size} of the preferred $\Theta^p$ and averted $\Theta^a$ solution sets using the notion of the \textbf{basin of attraction (BA) for a cue $k$}: a subset $N$ in the parameter space such that initializing the training with a point $\theta_0\in N$ leads to a $k$-biased solution $\theta^\star\in\Theta_k$. To measure the \textbf{size} of the 
BA for $\Theta_k$, we start by obtaining a solution $\theta_k\in\Theta_k$ biased to the cue $k$. We then check when the final models trained from perturbed points $\theta_k+v$ (where $\|v\|_2$ is small) stops being biased to the cue $k$. We treat the norm of the bias-shifting critical perturbation $\|v\|_2$ as the size of the BA for $\Theta_k$.

\paragraph{Preferred solutions have wider BA.}
Figure \ref{fig:radius_rerun} shows the size of BA measured according to the method above. The upper row shows initialization around solutions biased to cues in DSprites. For the color cue, we observe that initializing far away from the color-biased solution still results in a color-biased solution. For shape, scale, and orientation cues, we observe that initializing away from the respective solutions gradually results in color-biased solutions with a reduction in the bias towards the respective biases in the unperturbed initial parameters. The sizes of basins of attraction are ordered according to color$>$shape$>$scale$>$orientation, largely following the preferential ordering observed in \S\ref{sec:observations:preferential-ordering}. The lower row of Figure \ref{fig:radius_rerun} shows the results for the three cues in UTKFace. As opposed to DSprites, the solutions for ethnicity, age, and gender do not re-converge to a clear cue after retraining for only 50 epochs. 
The sizes of basins of attraction are ordered as ethnicity$>$gender$>$age, following the preferential ordering of cues observed in \S\ref{sec:observations:preferential-ordering}.

\begin{wrapfigure}{r}{0.18\linewidth}
    \centering
    \vspace{-1.7em}
    \includegraphics[width=.95\linewidth]{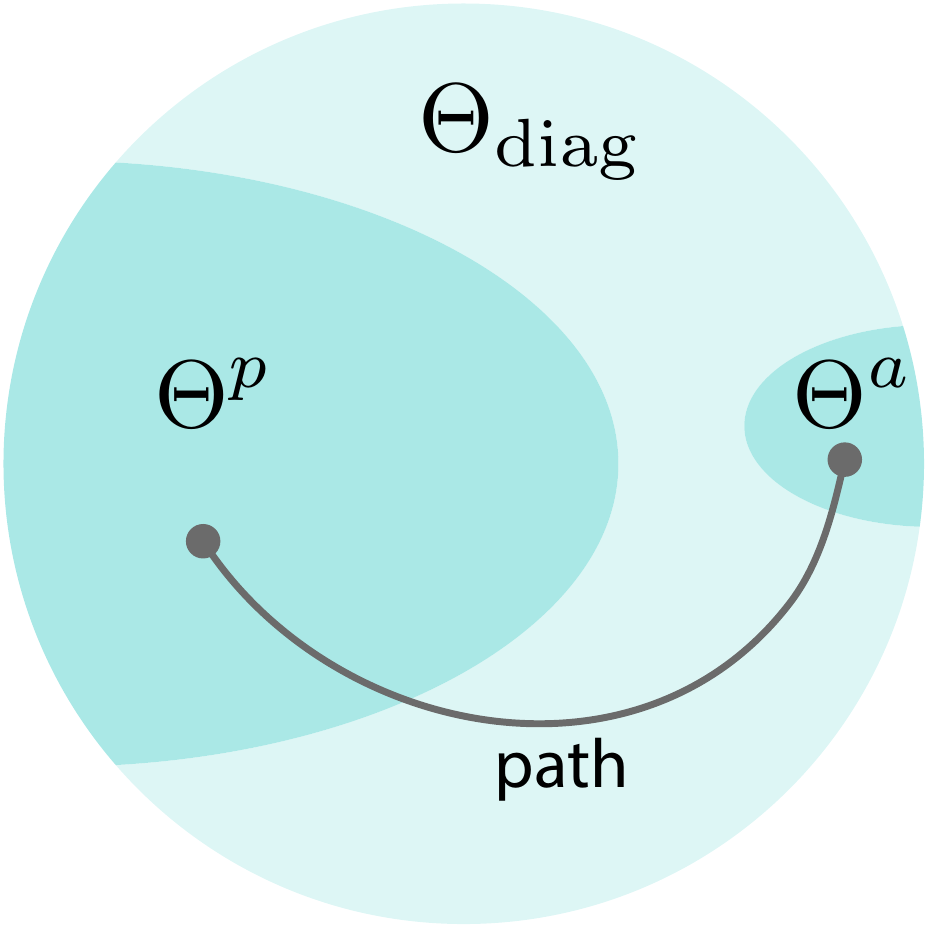}
    \vspace{-1.5em}
\end{wrapfigure}
\paragraph{Path connectivity.}
Recent works have proposed methods to find near zero-loss paths between two solutions \citep{garipov2018loss, draxler2018essentially, benton2021loss}. We extend the technique to find the zero-loss (w.r.t. $\mathcal{D}_\text{diag}$) path between preferred and averted solutions. We locate the flipping moment for the bias along the path; there must be a \textit{bias-shift boundary} where the bias to the preferred cue shifts to the averted cue along the path (by an intermediate-value-theorem-like argument). By comparing the length of the segment up to this boundary, we may estimate the relative abundance of the preferred and averted solutions in the parameter space.

\begin{figure}[ht]
	\vspace{-2em}
    \centering
	\begin{subfigure}[b]{\textwidth}
    	\begin{subfigure}[b]{.245\textwidth}
    		\includegraphics[width=\linewidth]{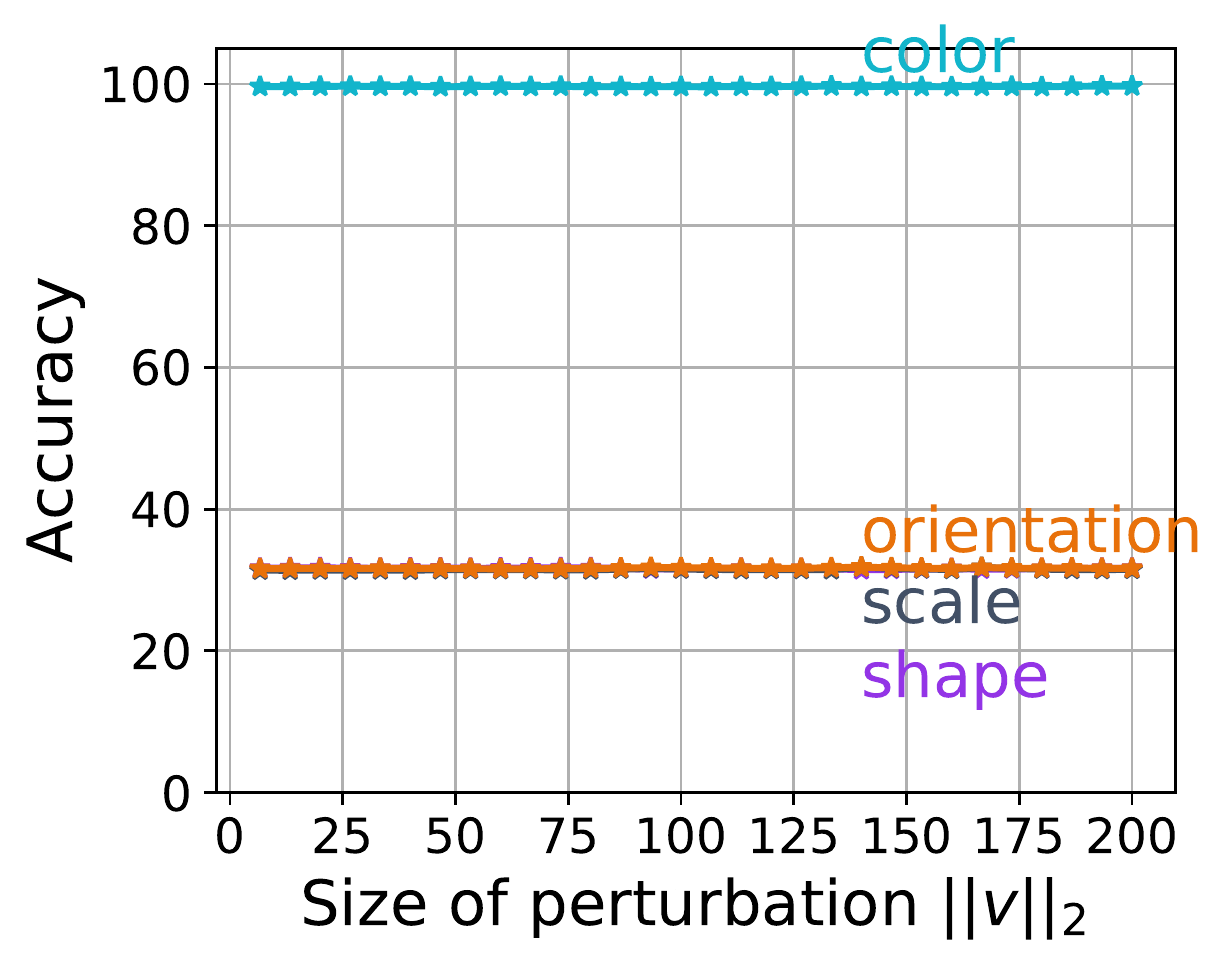}
    		\vspace{-1.7em}
    		\caption{Color}
    	\end{subfigure} 
    	\begin{subfigure}[b]{.245\textwidth}
    		\includegraphics[width=\linewidth]{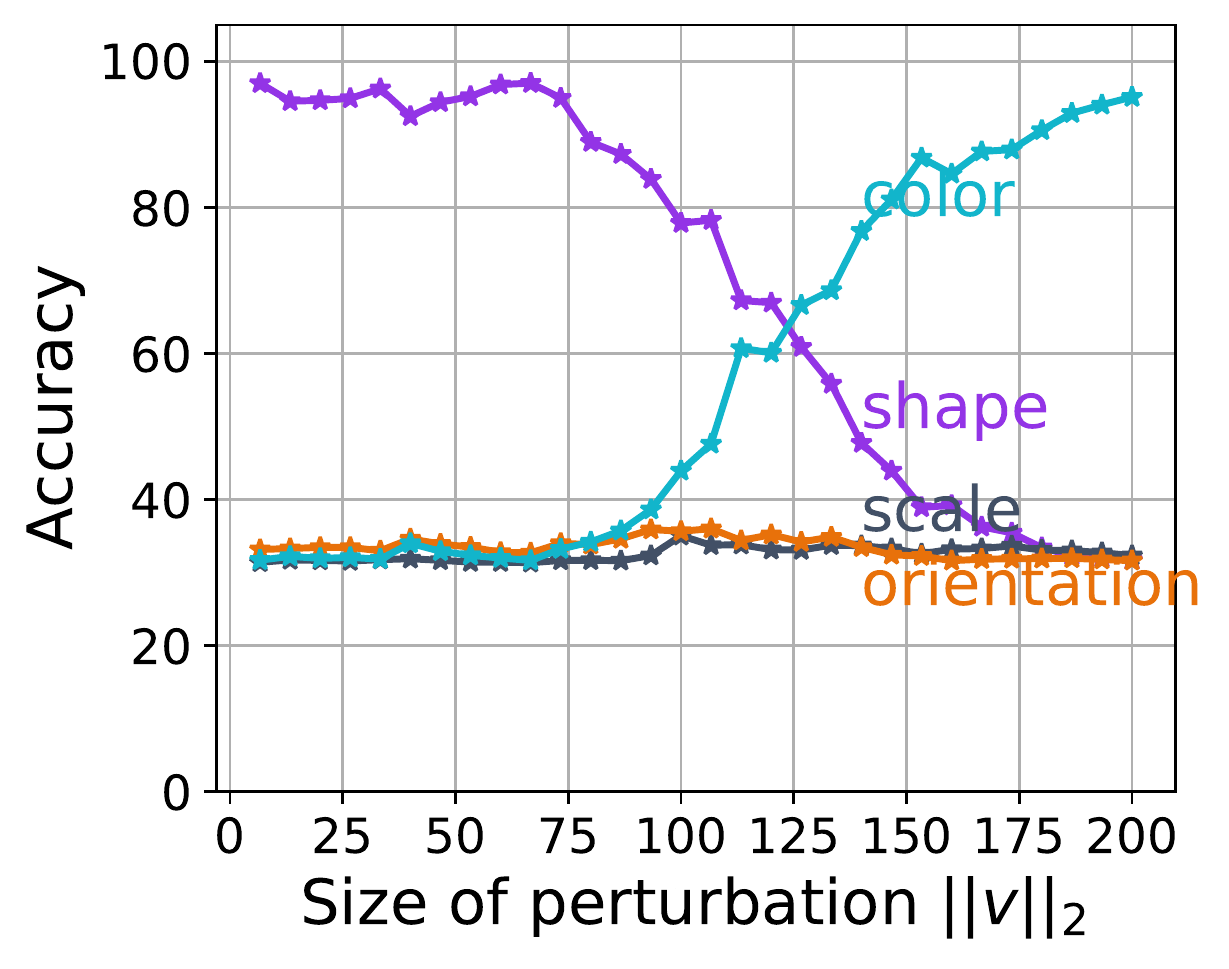}
    		\vspace{-1.7em}
    		\caption{Shape}
    	\end{subfigure}
    	\begin{subfigure}[b]{.245\textwidth}
    		\includegraphics[width=\linewidth]{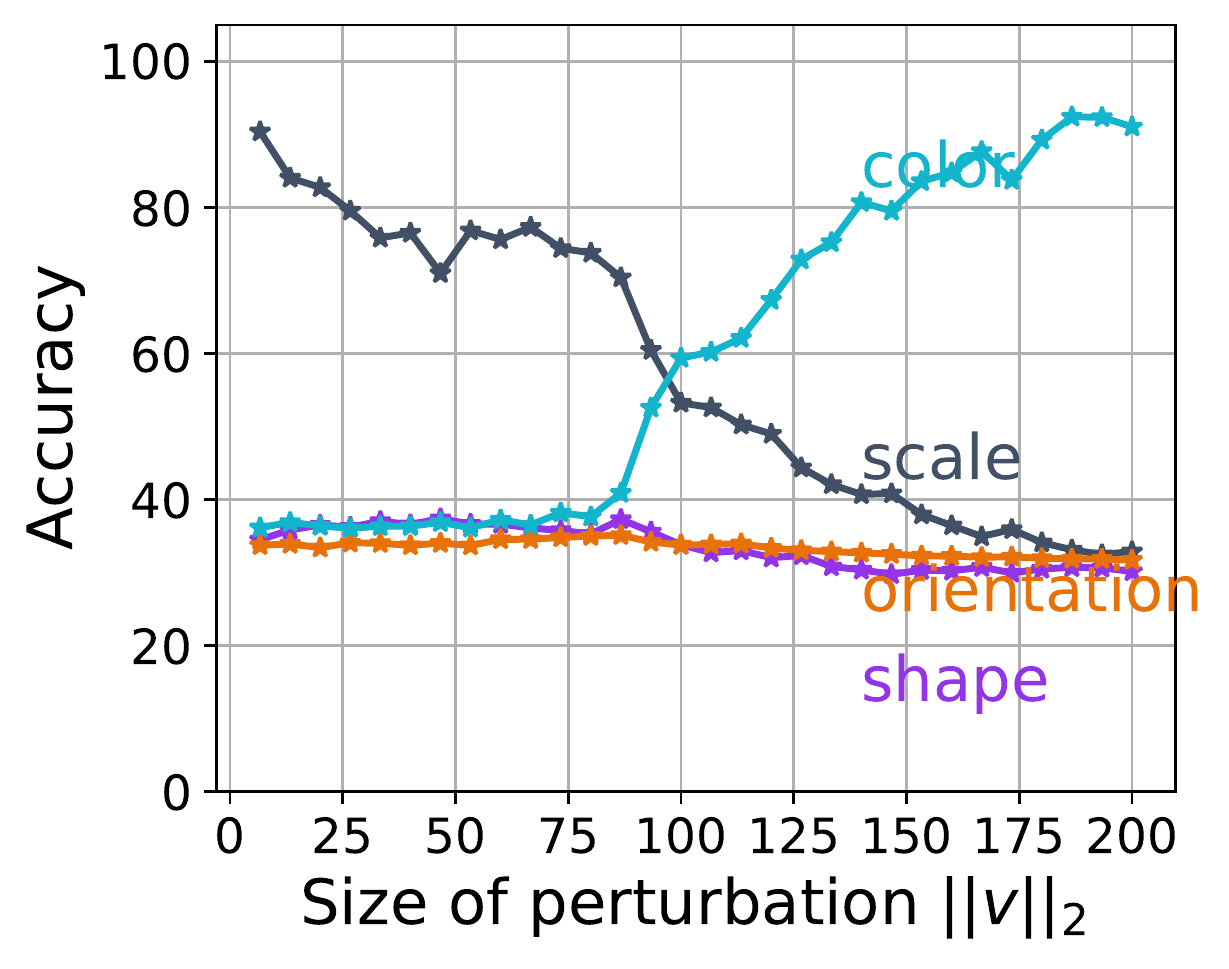}
    		\vspace{-1.7em}
    		\caption{Scale}
    	\end{subfigure} 
    	\begin{subfigure}[b]{.245\textwidth}
    		\includegraphics[width=\linewidth]{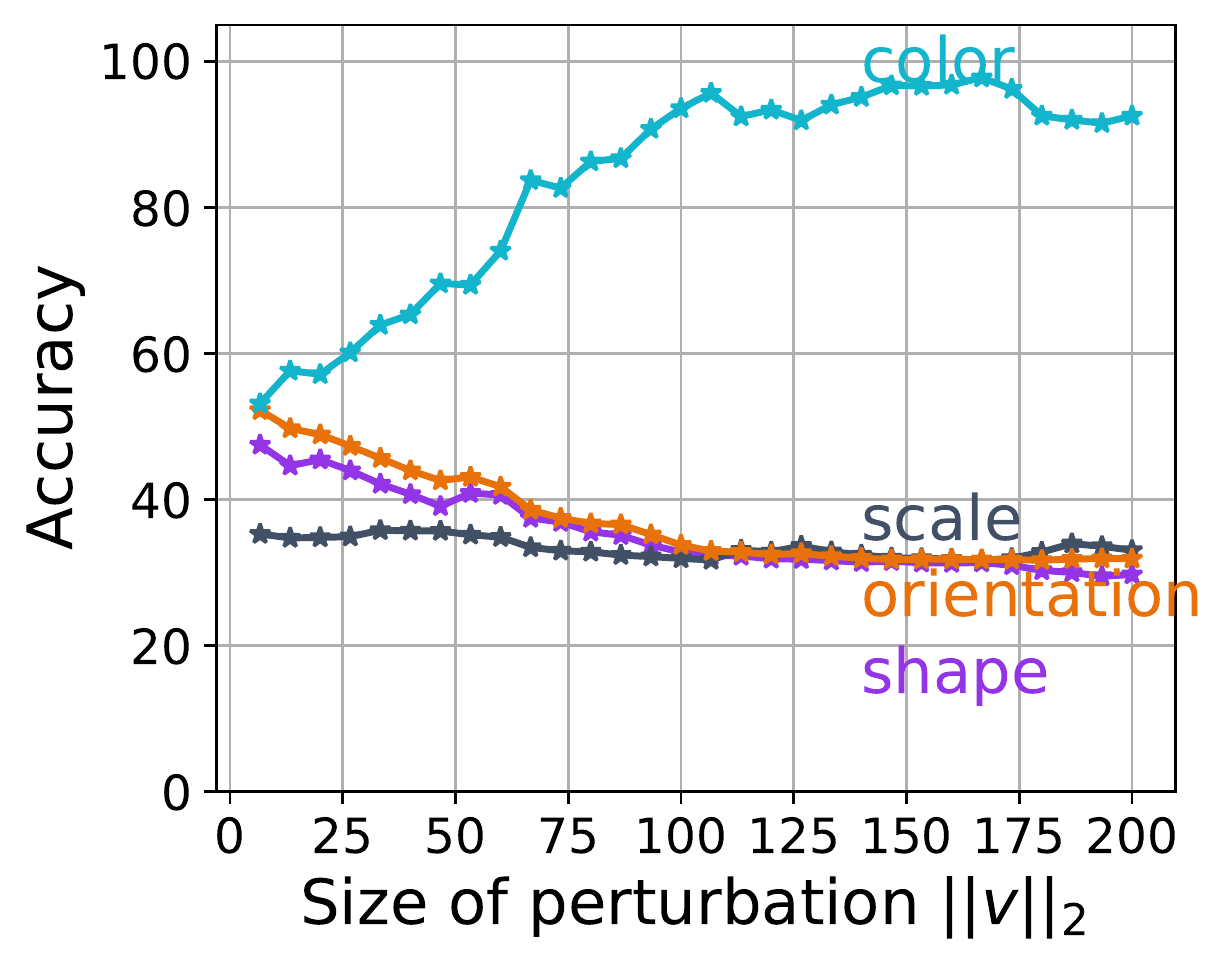}
    		\vspace{-1.7em}
    		\caption{Orientation}
    	\end{subfigure} 
	\end{subfigure} 
	\begin{subfigure}[b]{\textwidth}
	    \hspace{5em}
    	\begin{subfigure}[b]{.245\textwidth}
    		\includegraphics[width=\linewidth]{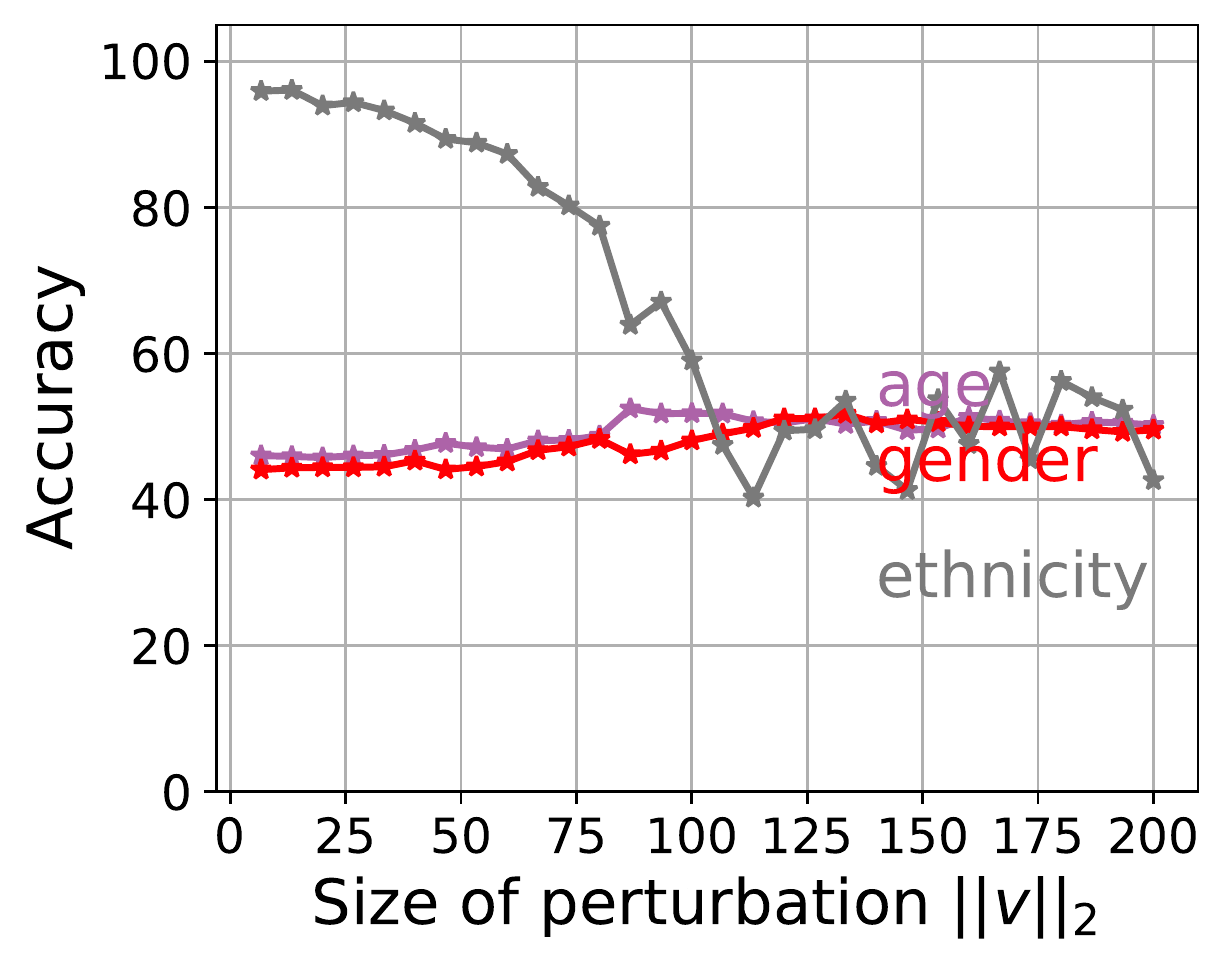}
    		\vspace{-1.7em}
    		\caption{Ethnicity}
    	\end{subfigure} 
    	\begin{subfigure}[b]{.245\textwidth}
    		\includegraphics[width=\linewidth]{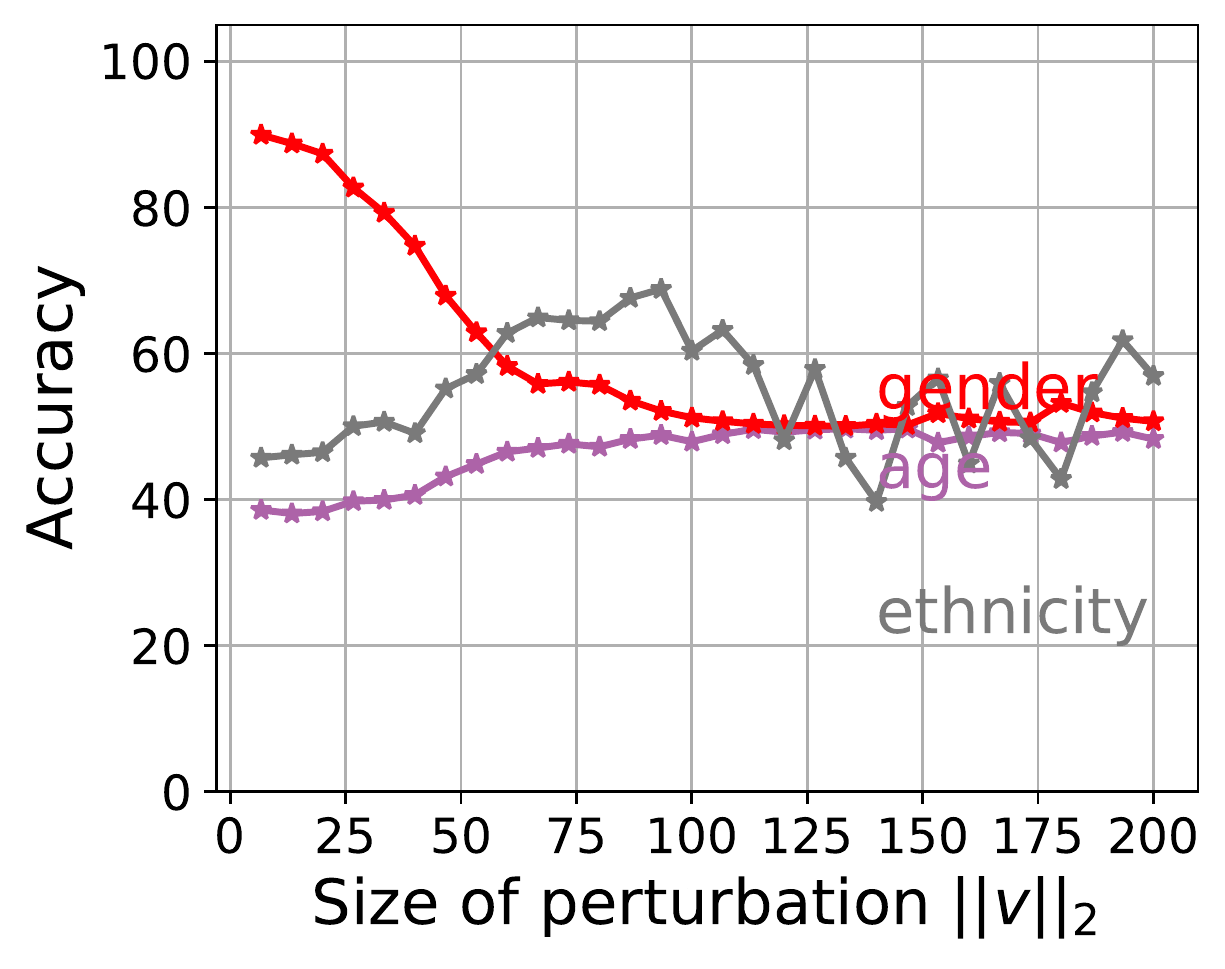}
    		\vspace{-1.7em}
    		\caption{Gender}
    	\end{subfigure} 
    	\begin{subfigure}[b]{.245\textwidth}
    		\includegraphics[width=\linewidth]{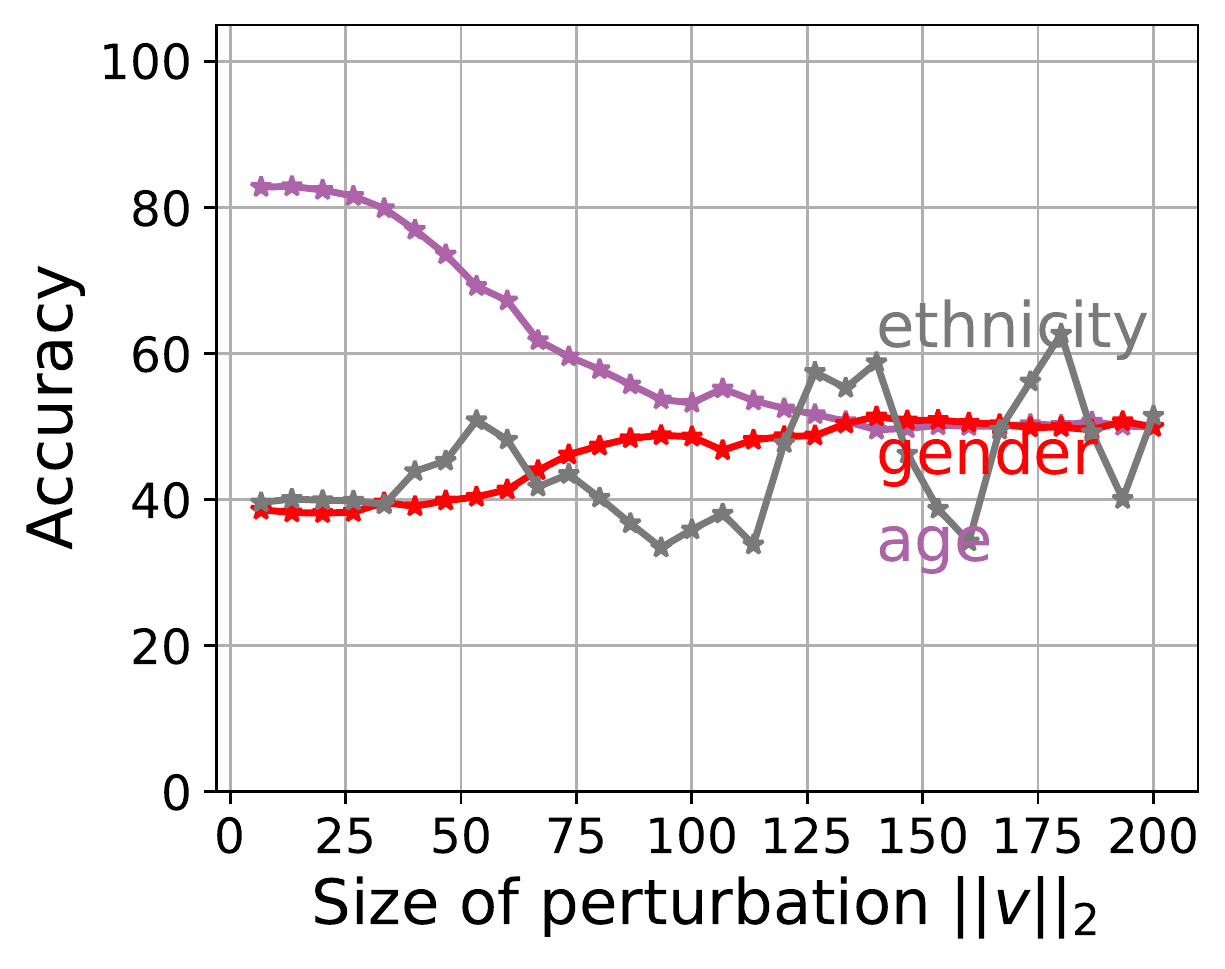}
    		\vspace{-1.7em}
    		\caption{Age}
    	\end{subfigure}
	\end{subfigure} 
	\vspace{-1.5em}
	\caption{\small\textbf{Basin of attraction.} 
	We initialize ResNet20 around solutions biased to each of the four cues in DSprites (upper row) and three cues in UTKFace (lower row). We report the averaged unbiased accuracies for models initialized with perturbed solutions $\theta^\star+v$. The x-axes are the perturbation sizes $\|v\|_2$. From the perturbed initial parameters, each model is trained for 50 epochs.
	}
	\label{fig:radius_rerun}
\end{figure}

\begin{figure}
\footnotesize
\centering
\setlength{\tabcolsep}{.35em}
\begin{tabular}{ccccc}
& {Accuracies on path} & {Diagonal loss} & {Preferred-cue loss} & {Averted-cue loss}  \\
\begin{sideways}\begin{tabular}[c]{@{}c@{}} {Color to}\\  {orientation}\end{tabular}\end{sideways}
&   \includegraphics[width=9em]{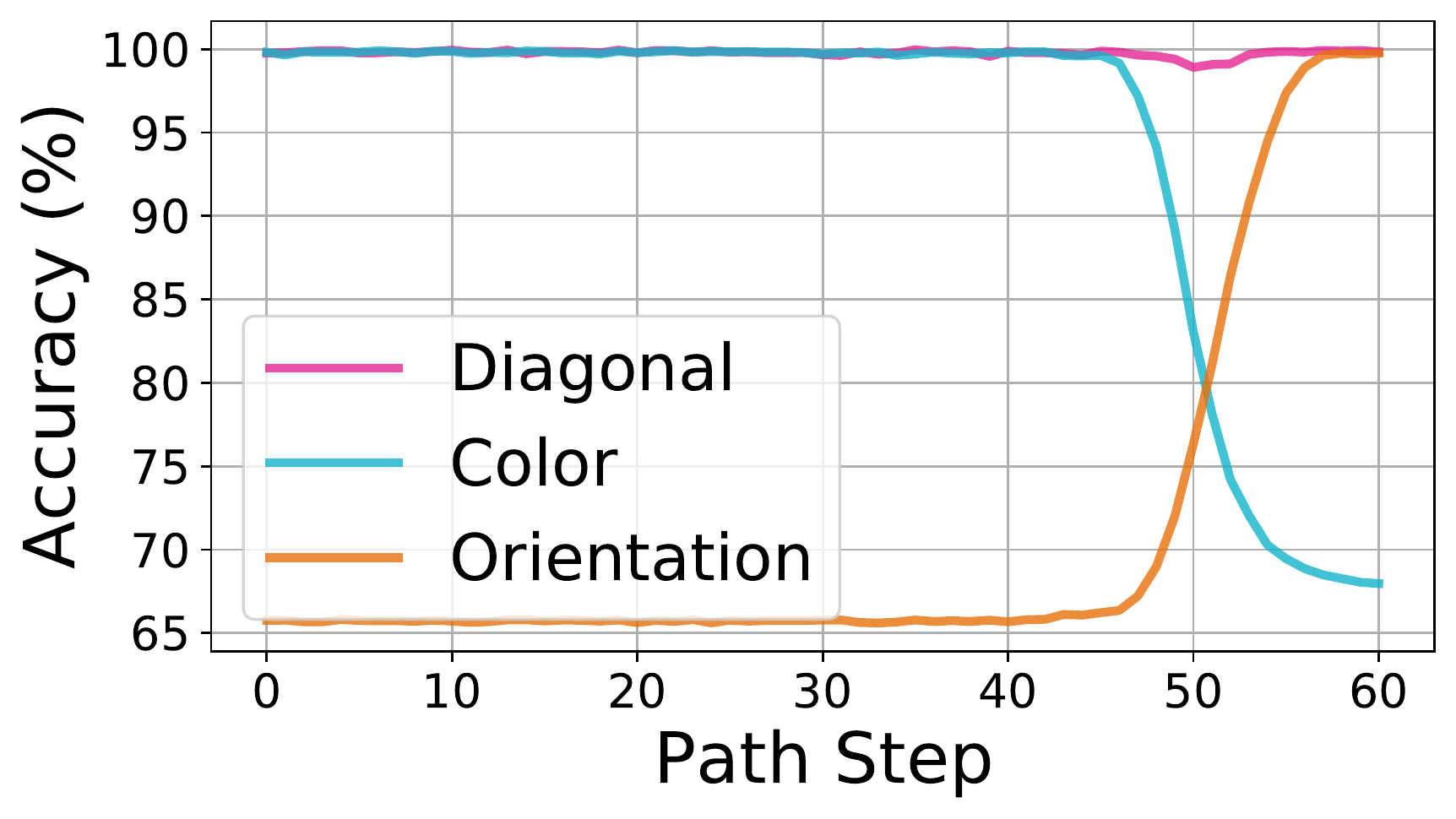} 
&   \includegraphics[width=9em]{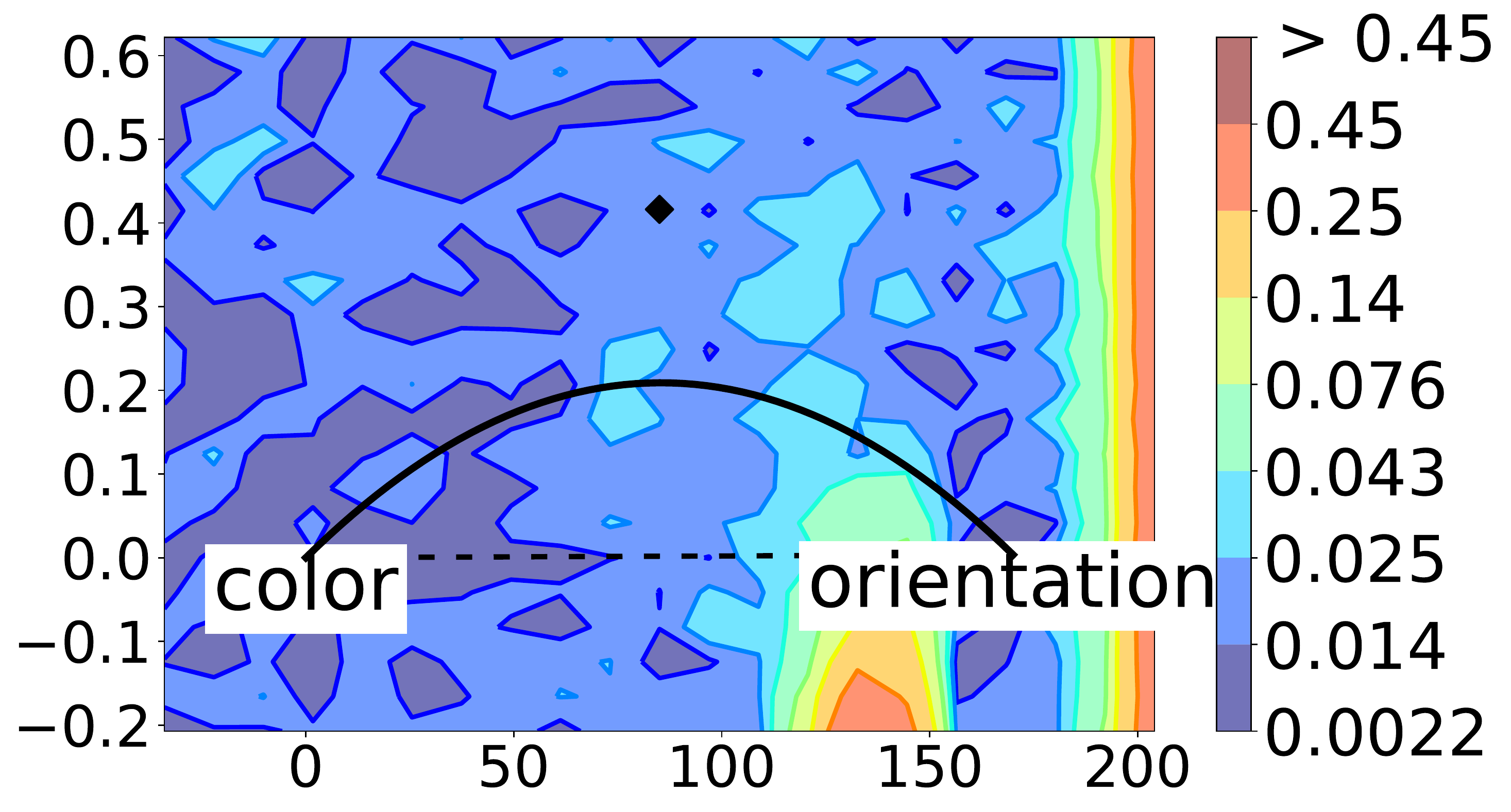}
&   \includegraphics[width=9em]{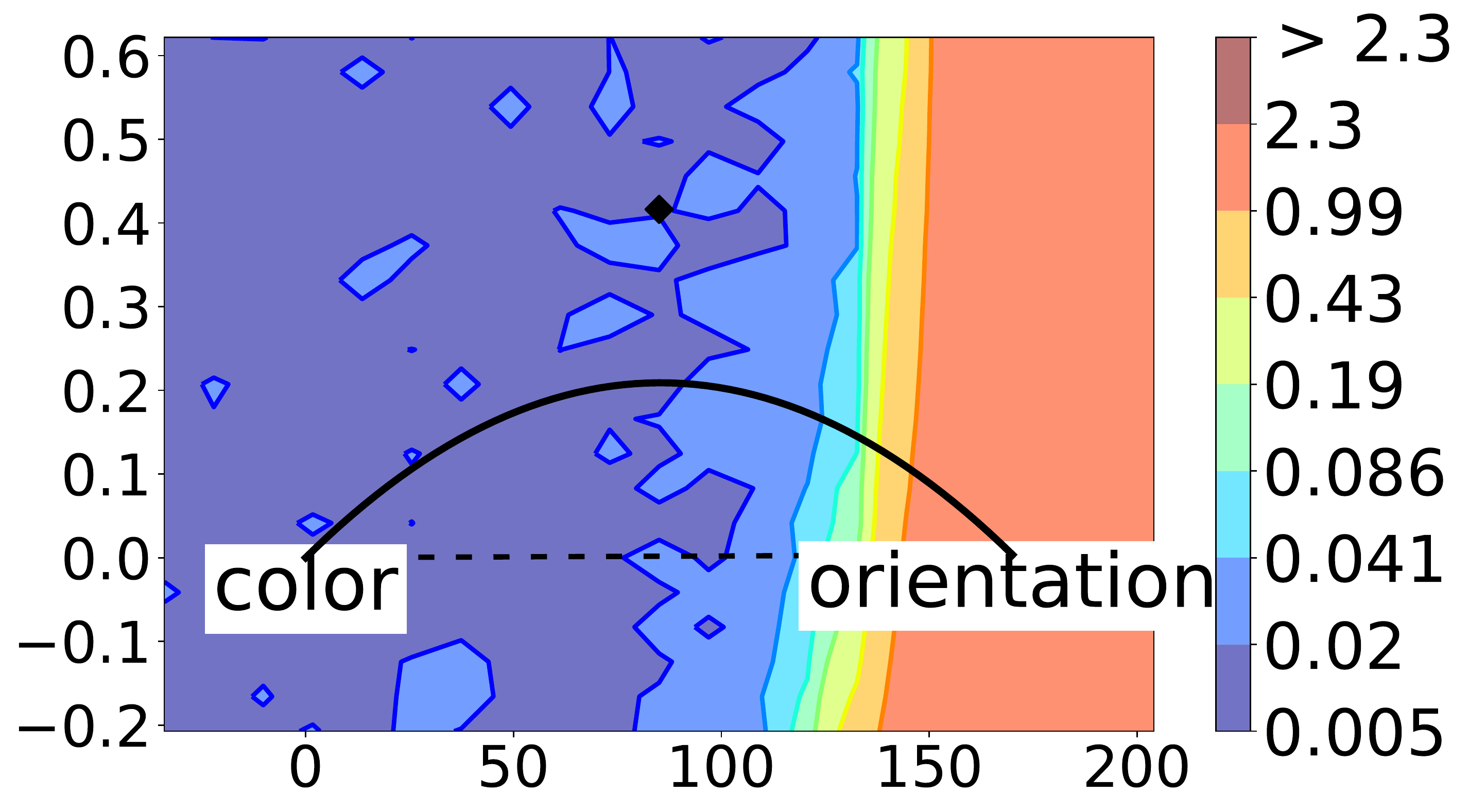}
&   \includegraphics[width=9em]{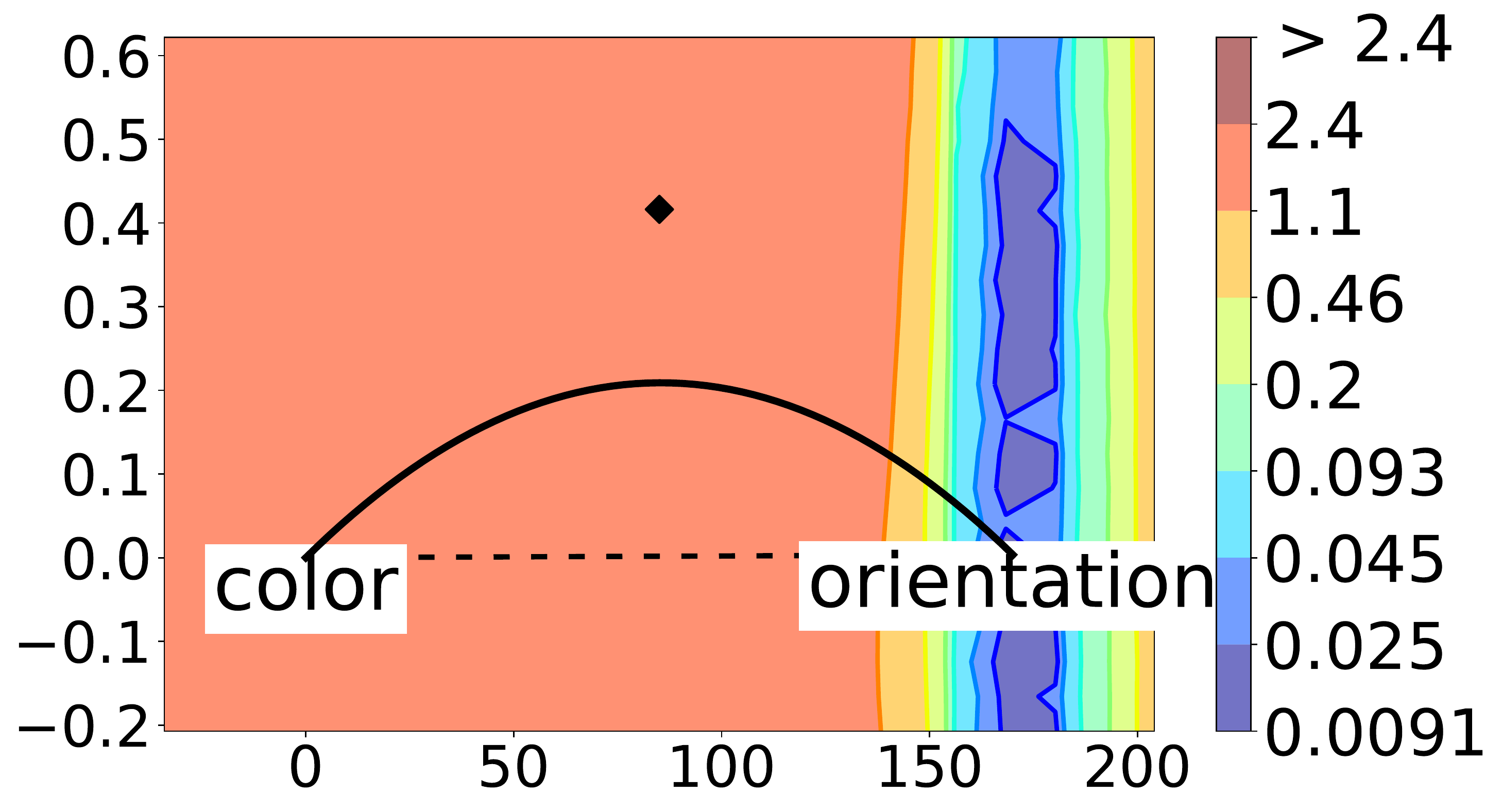}\\
 \begin{sideways}\begin{tabular}[c]{@{}c@{}} {Ethnicity to}\\  {gender}\end{tabular}\end{sideways}
&   \includegraphics[width=9em]{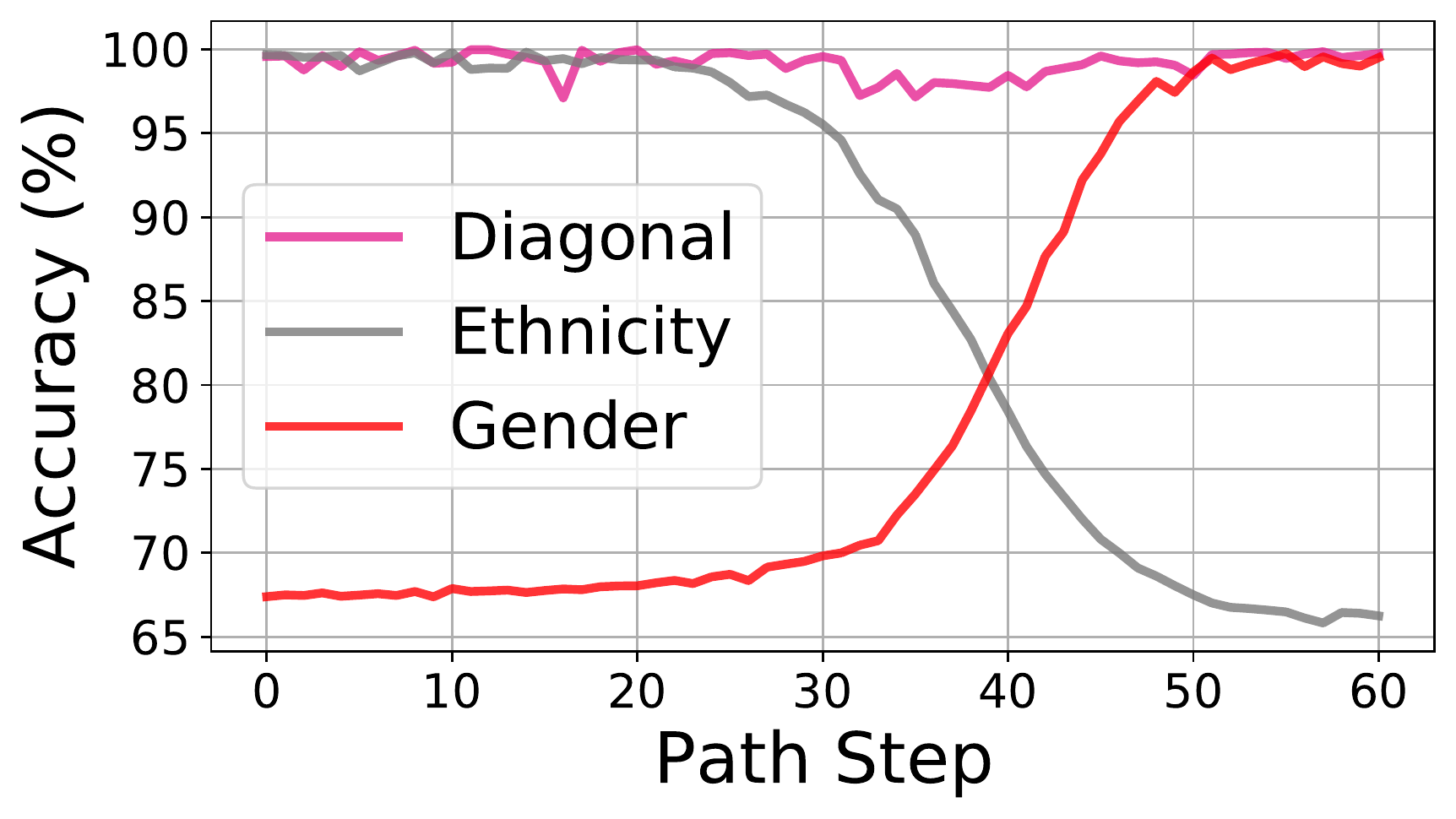}          
&   \includegraphics[width=9em]{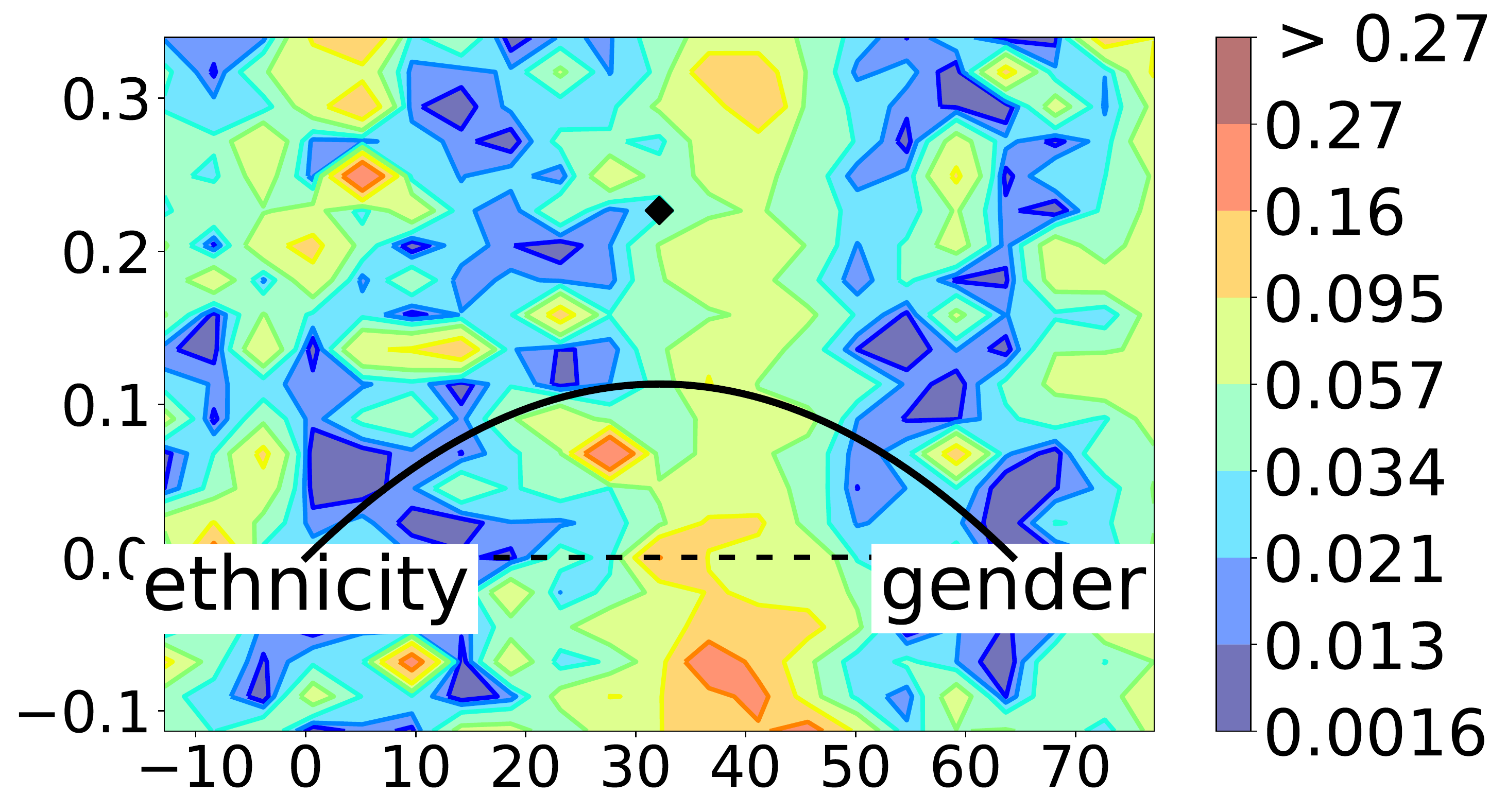} 
&   \includegraphics[width=9em]{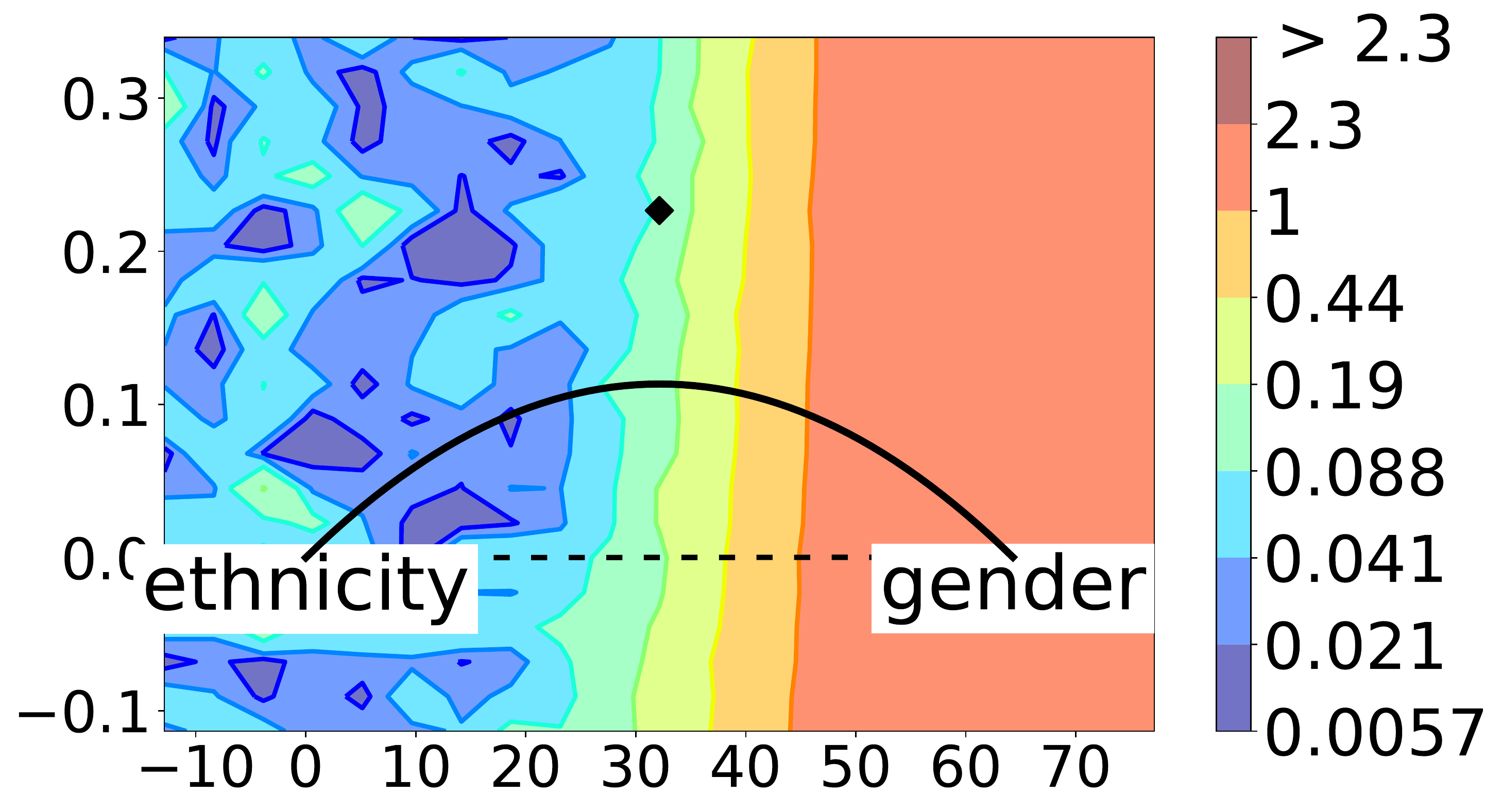}  
&   \includegraphics[width=9em]{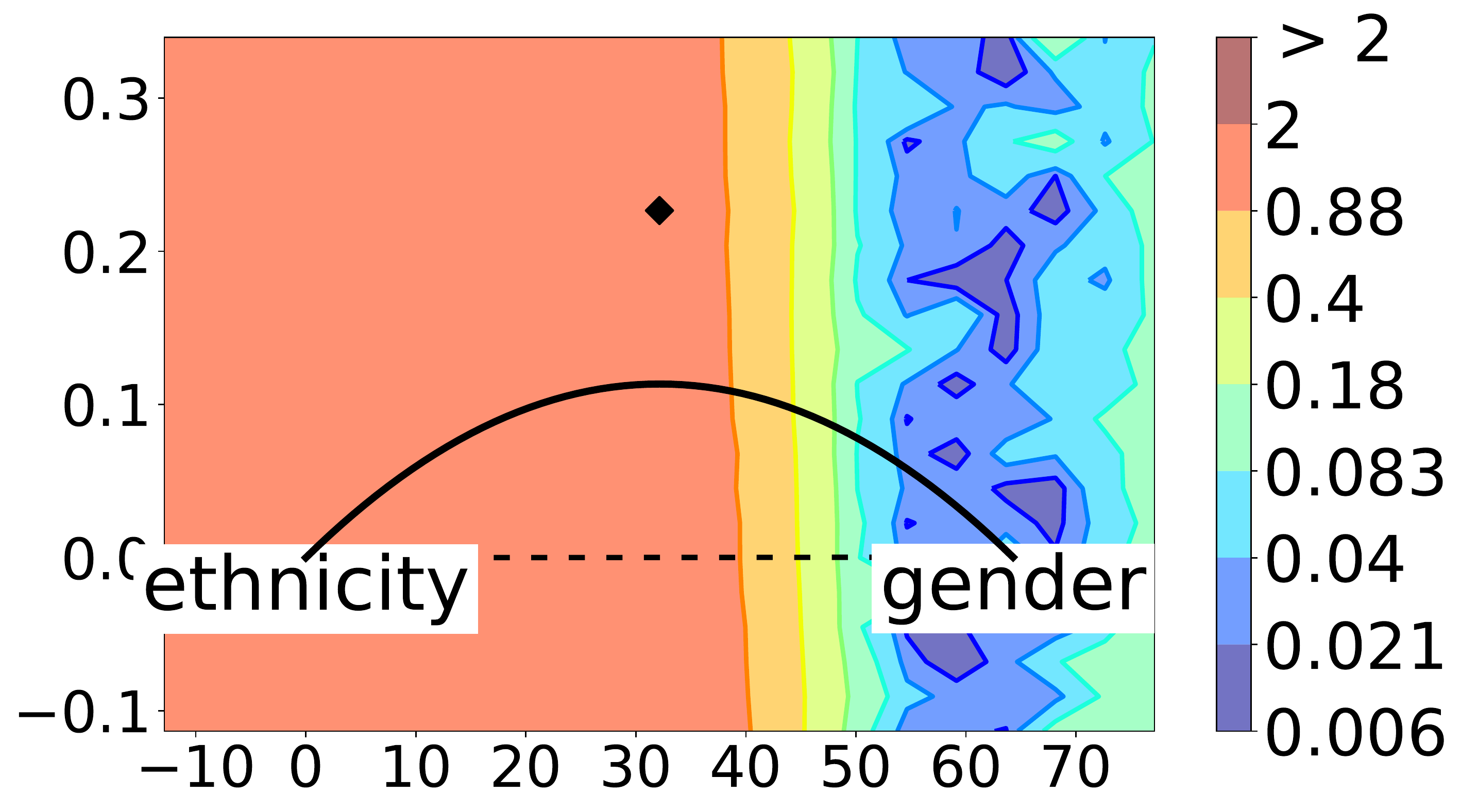}            
\end{tabular}
\caption{\small\textbf{Paths connecting $\Theta^p$ and $\Theta^a$.} 
We compute paths between preferred and averted solutions along which the diagonal-dataset ($\mathcal{D}_{\text{diag}}$) loss is zero.
On the leftmost column, we show the unbiased accuracies for the preferred and averted cues along the path; we observe a bias shift in the middle in each case.
The three rightmost columns visualize the loss values around the zero-loss paths (solid curves), with respect to the losses computed on diagonal $\mathcal{D}_{\text{diag}}$, preferred cue $\mathcal{D}_p$, and averted cue $\mathcal{D}_a$ datasets. The zero-loss paths are quadratic Bezier curves, parametrised by three dots shown in the plots: the two end points (preferred and averted solutions) and a third point that regulate the curve shape by ``pulling'' it. Inspired by \citet{garipov2018loss}.
}
\label{fig:dsprites:path_between_minima}
\end{figure}

\paragraph{Paths connecting preferred and averted solutions and the bias-shift boundary.}
Figure \ref{fig:dsprites:path_between_minima} visualizes the paths between the preferred and averted solutions as well as the corresponding shifts in biases along the path. 
We observe that the paths generally show a single shift in bias. The shift is also usually quick; for the most part of the path, the solutions tend to be heavily biased towards either of the cues.
For the path from $\Theta_\text{color}$ to $\Theta_\text{orientation}$, we observe a tendency for $\Theta_\text{color}$ to dominate the path until near $\Theta_\text{orientation}$, suggesting the relative abundance of color-biased solutions $\Theta_\text{color}$ in the set of diagonal solutions $\Theta_\text{diag}$ compared to the orientation $\Theta_\text{orientation}$. The path from $\Theta_\text{ethnicity}$ to $\Theta_\text{gender}$ is composed of a longer segment for the $\Theta_\text{ethnicity}$ part. The relative abundance of cues in the diagonal solutions $\Theta_\text{diag}$ resonates the preferential ordering of cues seen in \S\ref{sec:observations:preferential-ordering}.
\section{Explanations}
\label{sec:explanations}

We have observed intriguing properties of cues and shortcut biases. Even under the equal opportunities for the cues to be adopted, models have a preference for a few cues over the others. The preference is stable across architectures and weight initialization. We have also found a strong correlation between the preference for a cue and the abundance of solutions biased to that cue. In this section, we explain the observations based on the \textit{complexity of cues}.

\paragraph{Kolmogorov complexity of cues (KCC).}
We formally define \textbf{cue} as the condition distribution $p_{Y|X}$ for a given input distribution $p_X$; in a sense, it concurs with the definition of \textit{task}. Kolmogorov complexity (KC, \cite{kolmogorov_complexity}) measures complexity of binary strings based on the minimal length among programs that generate the string. 
\citet{task_kolmogorov_complexity} have defined the \textbf{Kolmogorov complexity of a task}
(or \textbf{a cue}) $p_{Y|X}$ as:
\begin{align}
    K(p_{Y|X})=\min_{\mathcal{L}(f;X,Y)<\delta} K(f)
    \label{eq:kolmogorov_task}
\end{align}
for some sufficiently small scalar $\delta>0$. $f$ denotes a discriminative model mapping $X$ to $Y$ and $\mathcal{L}$ is a suitable loss function. $K(f)$ is the KC for model $f$. Intuitively, $K(p_{Y|X})$ measures the minimal complexity of the function $f$ required to \textit{memorize} the labeling $p_{Y|X}$ on the training set (\ie $\mathcal{L}<\delta$). 

\paragraph{Measuring KCC.}
We estimate the KC of four cues (color, scale, shape, and orientation) considered in DSprites based on \eqref{eq:kolmogorov_task}. Since it is impractical to optimise $K(f)$ over an open-world of \emph{all} possible models $f$, we constrain the search to a \emph{homogeneous} family of models $f\in\mathcal{F}$. We use ResNet20 with variable channel sizes \citep{Zagoruyko2016WRN} for $\mathcal{F}$. Given that KC is uncomputable, we use the number of parameters in $f$ as an approximation to $K(f)$ in practice \citep{kolmogorov_simplicity_bias_DNNs}. We set the memorization criterion $\mathcal{L}(f;X,Y)<\delta$ as ``training-set error for $f$ is $<1\%$''. Under this setup, we have verified that the complexities of cues are 1.2K parameters for color, 4.6K for scale, 17.6K for shape, and $>273$K for orientation. We were not able to fully memorize the orientation labels under the network scales we considered. The complexity ranking strongly concurs with the preferential ordering found in \S\ref{sec:observations:preferential-ordering}.

\paragraph{The simplicity bias in DNN parameter space.}
How are the cue complexity and shortcut bias connected? We explain the underlying mechanism based on a related theory in a different context: explaining the generalizability of DNNs with the \textit{simplicity bias} \citep{kolmogorov_simplicity_bias_DNNs,pitfalls_simplicity_bias}. It has been argued that, despite a large number of parameters, DNNs still generalize due to the inborn tendency to encode simple functions. This phenomenon has filled the theoretical gap left by the classical learning theory. The argument is that a (uniform) random selection of parameters for a DNN is likely to result in a function $f$ that is simple in the sense of KC.  More precisely, the likelihood of a function $f$ being encoded by a DNN is governed by the inequality $p(f)\lesssim 2^{-a\widetilde{K}(f)+b}$, where $K(f)$ is the KC of $f$ and $a>0$, $b$ are constants independent of $f$ \citep{kolmogorov_simplicity_bias_DNNs}. This inequality stems from the Algorithmic Information Theory and has been validated to hold for a large variety of systems that arise naturally, such as RNA sequences and solutions to a large class of differential equations \citep{input_output_kolmogorov_bias}. The argument concludes that the natural abundance of simple functions in the parameter space makes it more likely for the solutions to be simpler \citep{wu2017towards}, leading to the \textit{simplicity bias} and thus helping DNNs generalize well.

\paragraph{The simplicity bias leads to shortcuts.}
We have seen that cues have different KCs, necessitating DNNs with different complexities to fully represent them. The simplicity bias in the parameter space explains the abundance of solutions biased to simple cues like color $\Theta_\text{color}$ rather than complex cues like orientation $\Theta_\text{orientation}$ (\S\ref{sec:observations:abundance}). The difference in the abundance in turn makes it more likely for a model trained on the diagonal dataset $\mathcal{D}_\text{diag}$ to be biased towards simple cues (\S\ref{sec:observations:preferential-ordering}).

\section{Discussion and Conclusion} 
\label{sec:conclusion}

We have delved into the shortcut learning phenomenon in deep neural networks (DNNs). We have devised a training setup \setupname where multiple cues are equally valid for solving the task at hand and have verified that, despite the equal validity, DNNs tend to choose cues in a certain order (\eg color is preferred to orientation). We have shown that the simple cues (in the sense of Kolmogorov) are far more frequently represented in the parameter space and thus are far more likely to be adopted by DNNs. We conclude the paper with two final remarks on the implications of our studies.

\paragraph{Simplicity bias and generalization.}
The simplicity bias theory \citep{kolmogorov_simplicity_bias_DNNs} has been developed to explain the unreasonably good generalizability of DNNs. Our paper shows that it may interfere with the generalization. This apparent paradox stems from the difference in the setup. The simplicity bias is often safe to be exploited in the standard iid setup, albeit with caveats argued by \cite{pitfalls_simplicity_bias}. On the other hand, it may lead to less effective generalization across distribution shifts as in our learning setup \citep{geirhos2020shortcut}, if the simple cues are no longer generalizable. Thus, for the generalization across distribution shifts, machine learners may need to be given additional guidance to overcome the inborn tendencies to pick up simple cues for recognition.

\paragraph{Societal implications.}
An issue with the simplicity bias and shortcut learning is that relying on simple cues is sometimes unethical. We have seen in the UTKFace experiments that \textit{ethnicity} tends to be a more convenient shortcut than \textit{age}, for example, if they equally correlate with the target label at hand.
This is an alarming phenomenon. It emphasizes the importance and inevitability of active human intervention to undo certain naturally-arising biases to socially harmful cues in learned models. It also confirms that we are on the right track to actively regularize the models to behave according to the social norms \citep{fairness_in_machine_learning,survey_on_fairness}.

\paragraph{Toolbox contribution.}
Along the way, we contribute interesting novel tools that future research can utilise, such as the computation of Kolmogorov complexity (KC) for generic cues (\eqref{eq:kolmogorov_task}) and the method for finding a heterogeneous zero-loss curve connecting two solutions biased to different cues (\S\ref{sec:observations:abundance}; \citet{garipov2018loss}).

\subsection*{reproducibility}
We understand and resonate with the importance of reproducible research practices in machine learning. The reproducibility of this paper hinges on our use of open-source data, models, training techniques, and deep learning frameworks. We have specified a detailed experimental setup in \S\ref{sec:setup:datasets} and \S\ref{appendix:sec:model_architectures}. For the consistency of experimental observations, we have run the experiments multiple times with different random seeds, whenever applicable. For those cases, we have reported the mean and standard deviation for each data point.

\subsection*{ethics statement}
We delve into the potential ethical issues with DNNs. Our work excavates the potential ethical harms caused by deep neural networks' natural tendencies to prefer simple cues, where using the simple cues may not conform well to the social norm (\eg ethnicity cue). We have thus advocated an active human involvement to regularize such DNNs to adopt more socially acceptable cues.
As for the experiments, part of our studies is based on a face dataset (UTKFace). We confirm that the UTKFace dataset has been released with an open-source license with the non-commercial clause\footnote{\url{https://susanqq.github.io/UTKFace/}}. The copyright for the images still belongs to their respective owners. Inheriting the policy of UTKFace, we will erase the face samples in Figure \ref{fig:dataset} upon request by the owners of the respective images. 

%
\bibliography{biblio}
\bibliographystyle{iclr2021_conference}
\clearpage
\appendix
\numberwithin{equation}{section}
\numberwithin{figure}{section}
\numberwithin{table}{section}
%

\section{Model Architectures} 
\label{appendix:sec:model_architectures}

\paragraph{FFnet.}
The FFnet architecture tested is a fully connected feedforward neural network with 5 hidden layers with respectively 4096, 2048, 600, 300 and 100 hidden units. ReLU activations are used throughout. We train it using the Adam optimizer with the default parameters given by PyTorch 1.8.0.

\paragraph{ResNet20.}
The ResNet20 architecture is adapted from \cite{he2016deep}, with a depth of 20. We train it using the Adam optimizer with the default parameters given by PyTorch 1.8.0.

\paragraph{ViT.}
The Visual Transformer architecture is adapted from \citep{touvron2021training}. Given the dimensions of the images in ColorDSprites and consequently UTKFace, we implement a ViT model with patch size 8, embedding dimension 192, depth 12, and 3 attention heads. We train the model with the SGD optimizer with learning rate $5\times 10^{-3}$, momentum  $0.9$, and weight decay $1\times 10^{-4}$.

\section{Preferential Ordering after Removing the Dominant Cue}
\label{appendix:sec:cues-ordering}

For both DSprites and UTKFace, removing the dominant cue (color and ethnicity respectively) from $\sS$ does not change the ranking of the remaining cues; \eg scale and gender then become the dominant features in respective datasets. See Figure \ref{appendix:fig:preference} that extends the main paper Figure \ref{fig:preference}.

\begin{figure}[h]
    \centering
    \small
    \setlength{\tabcolsep}{.2em}
    \begin{tabular}{ccccc}
    \rotatebox{90}{\hspace{4em}\textbf{DSprites}}\hspace{1em}
    &\rotatebox{90}{${\footnotesize\mathbb{S}= \begin{Bmatrix}\text{shape},\text{scale}\\ \text{orientation}\end{Bmatrix}}$}
	&\includegraphics[width=.28\linewidth]{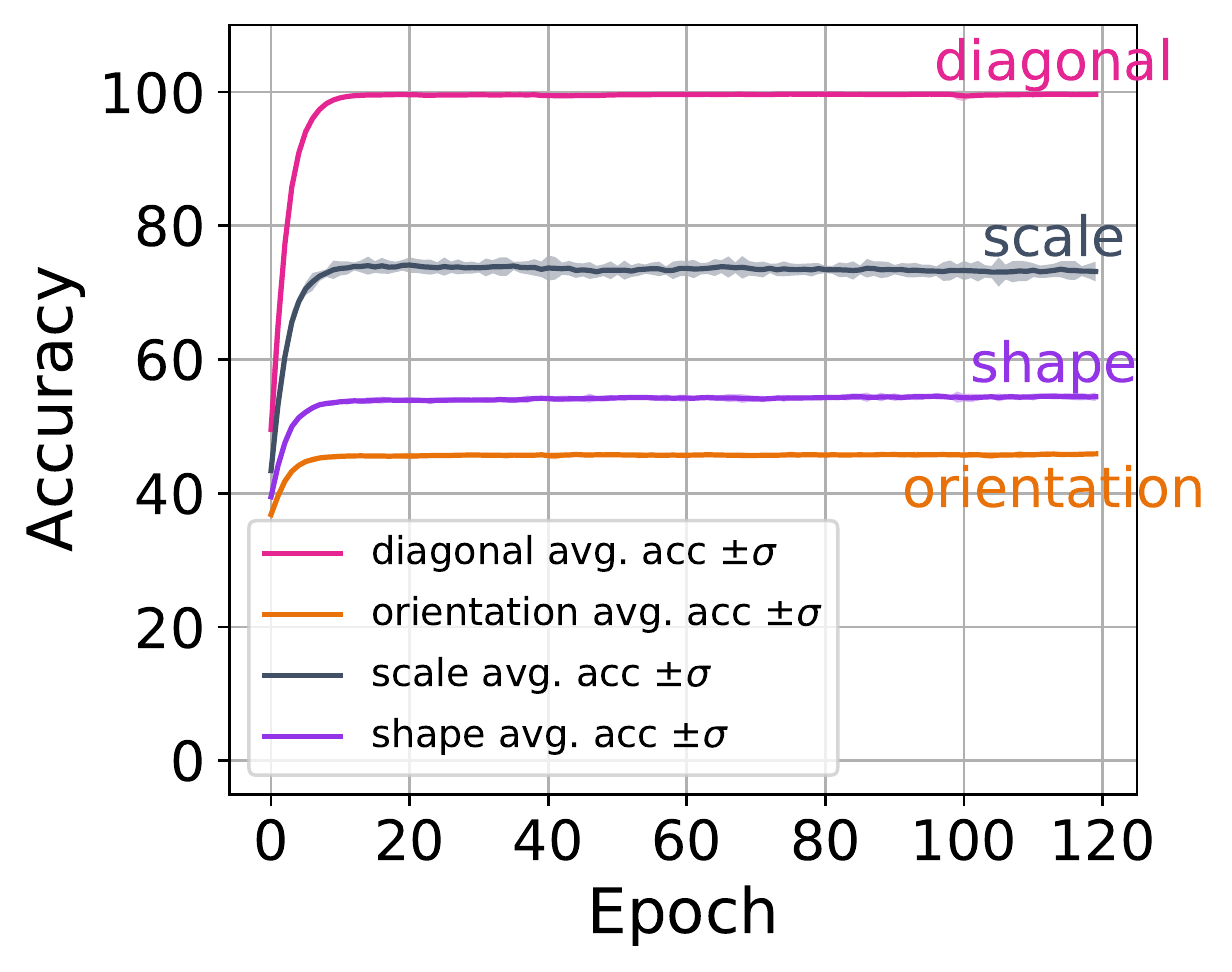}
	&\includegraphics[width=.28\linewidth]{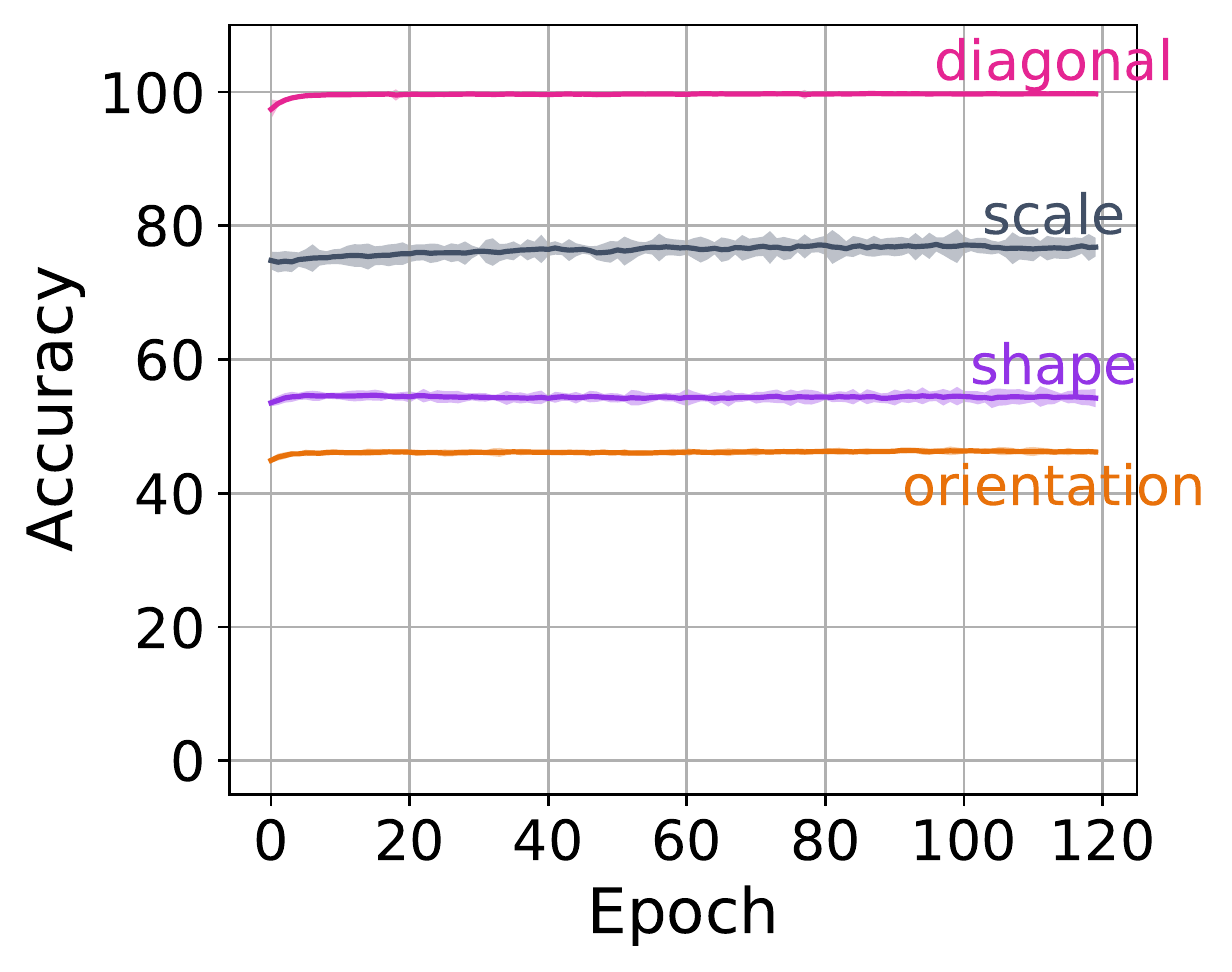}
	&\includegraphics[width=.28\linewidth]{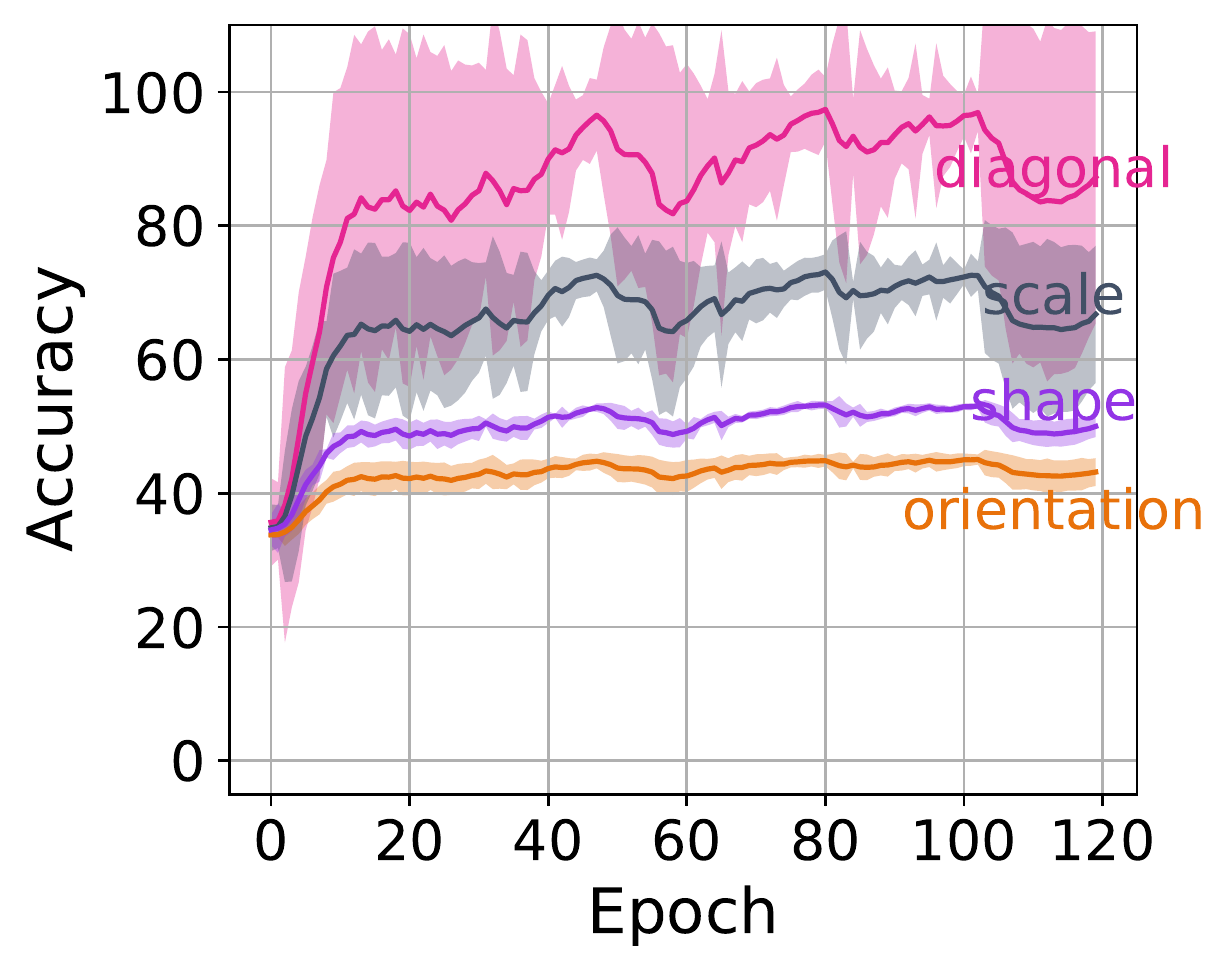}\\
    \rotatebox{90}{\hspace{3em}\textbf{UTKFace}}\hspace{1em}
    &\rotatebox{90}{${\footnotesize\mathbb{S}= \begin{Bmatrix}\text{gender}\\\text{age}\end{Bmatrix}}$}
	&\includegraphics[width=.28\linewidth]{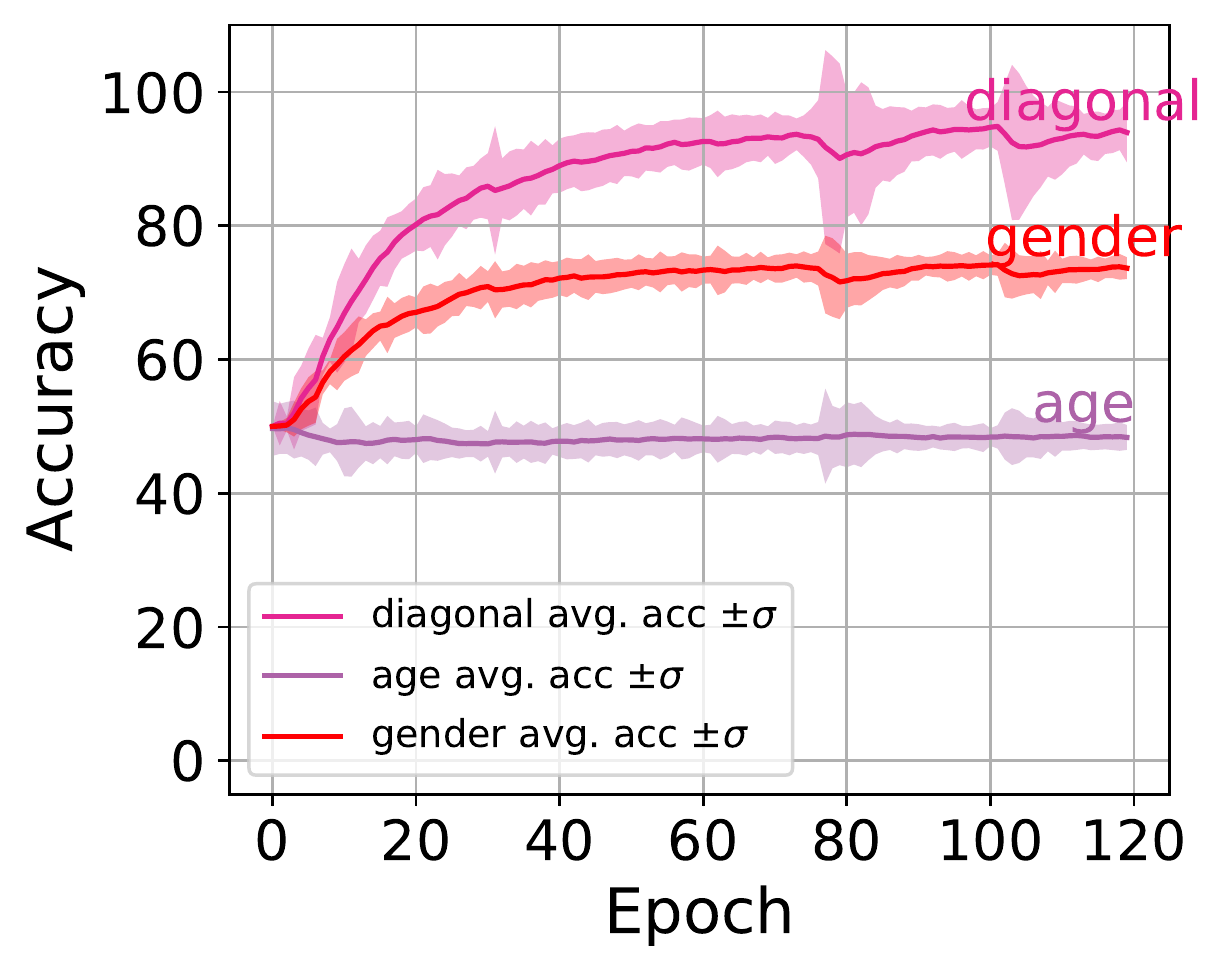}
	&\includegraphics[width=.28\linewidth]{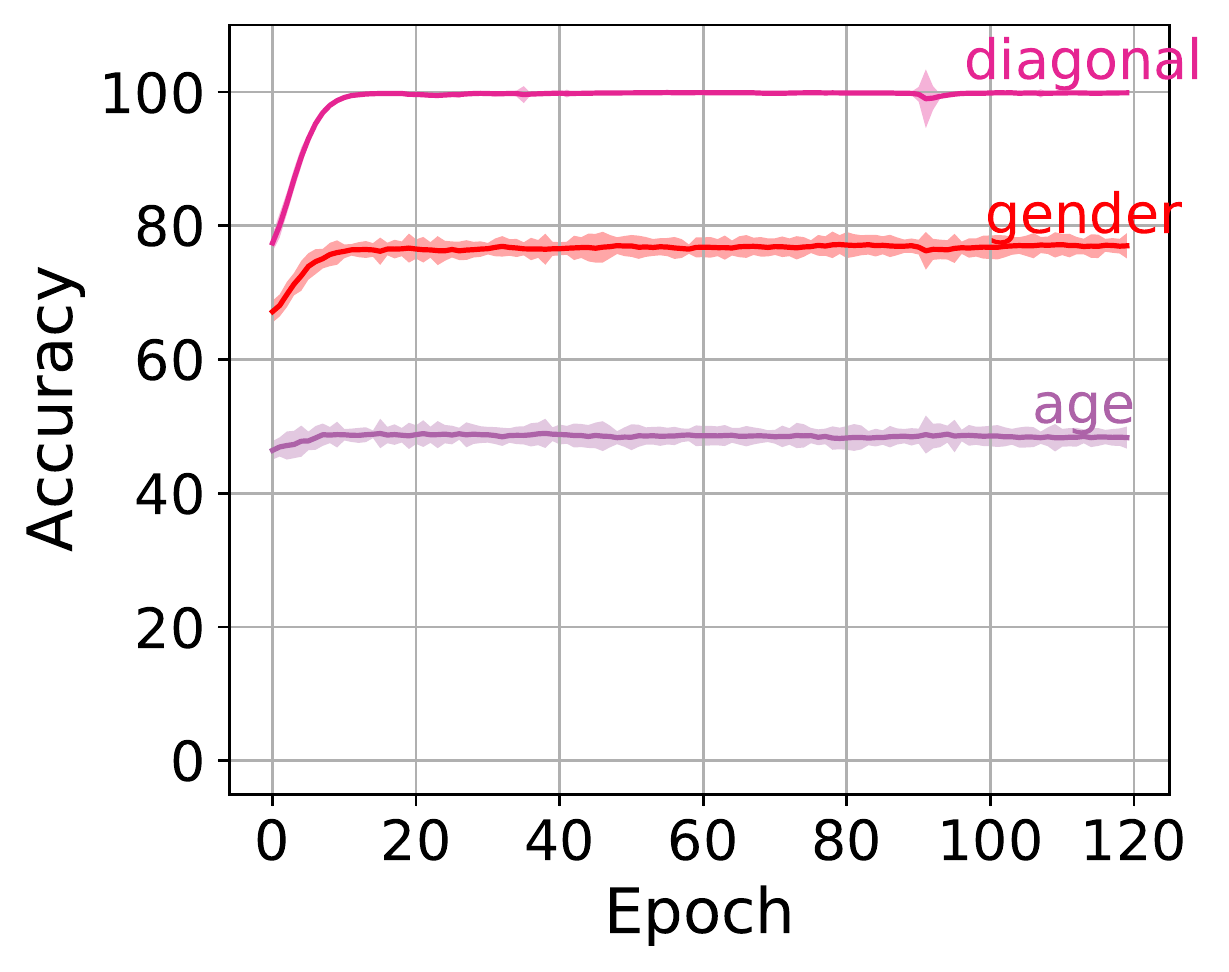}
	&\includegraphics[width=.28\linewidth]{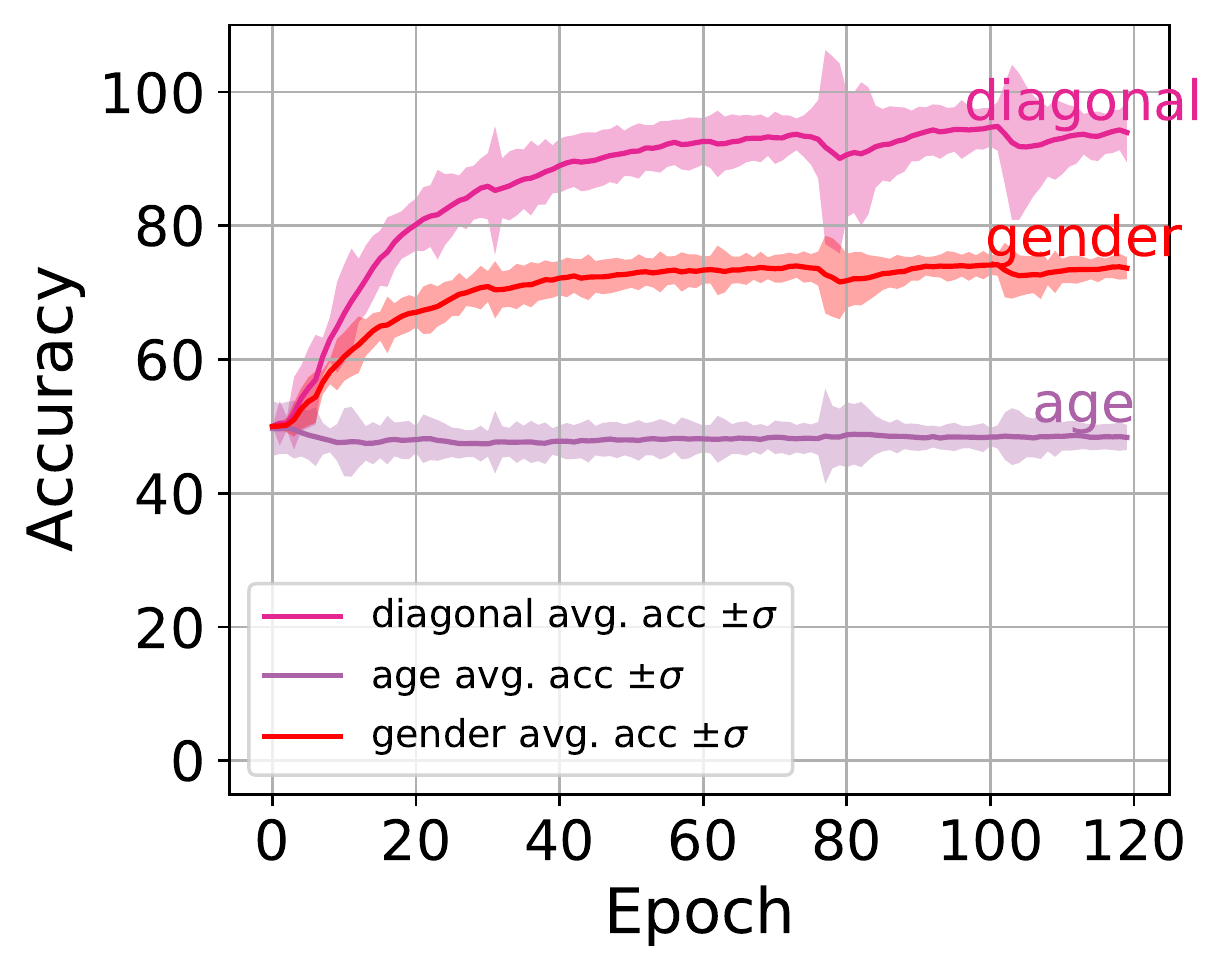}\\
	&& \textbf{FFnet} & \textbf{ResNet20} & \textbf{ViT}
    \end{tabular}
	\caption{\small\textbf{Preferential ordering of cues.}
	Diagonal and unbiased accuracies of models trained on the diagonal training set $\mathcal{D}_\text{diag}$ composed of the cues defined by $\mathbb{S}$ and tested on the off-diagonal sets $\mathcal{D}_{k}$ where $k\in\mathbb{S}$.}
	\label{appendix:fig:preference}
\end{figure}

\section{Loss Curvature around Solutions}

We supplement the visualization of loss landscape around solutions in Figure \ref{fig:dsprites:minima_surf} of main paper with quantitative estimation of the respective curvatures. The mean curvature of a surface defined by function $f$ is defined as: 
\begin{align}
    \kappa(f):=\frac{1}{D}\Delta f:=\frac{1}{D}\left(\frac{\partial^2 f}{\partial x_1^2}+\cdots+\frac{\partial^2 f}{\partial x_D^2}\right)
\end{align}
To make it computationally tractable for large $D$ (\eg DNN parameters), we estimate the curvature with a Monte-Carlo estimation as done in \cite{izmailov2019averaging}. For a point $x\in\mathbb{R}^D$, a sufficiently small scalar $\epsilon>0$, and a unit sphere $S^{D-1}\subset\mathbb{R}^D$, the curvature is computed by:
\begin{equation}\label{eq:monte-carlo-curvature}
    \kappa(f,x,\epsilon) \approx \frac{1}{\epsilon^2}\left[\mathbb{E} f(x+\epsilon r)-f(x)\right] \quad \text{where}\quad r\sim\text{Unif}(S^{D-1}).
\end{equation}
How does this approximate the mean curvature? We provide the following proposition.
\begin{proposition}
For a point $x\in\mathbb{R}^D$, a scalar $\epsilon>0$, a twice-differentiable function $f$, and a unit sphere $S^{D-1}\subset\mathbb{R}^D$, the following quantity
\begin{equation}
    \frac{1}{\epsilon^2}\left[\mathbb{E} f(x+\epsilon r)-f(x)\right] \quad \text{where}\quad r\sim\text{Unif}(S^{D})
\end{equation}
approximates the Laplacian $\Delta f$ up to a constant multiplier.
\end{proposition}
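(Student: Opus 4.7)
The plan is to Taylor-expand $f$ around $x$ to second order, take expectation over $r \sim \text{Unif}(S^{D-1})$, and identify the leading term with the Laplacian.

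First I would write
\begin{equation*}
f(x+\epsilon r) = f(x) + \epsilon\, \nabla f(x)^\top r + \tfrac{\epsilon^2}{2}\, r^\top \nabla^2 f(x)\, r + R(\epsilon, r),
\end{equation*}
where $R$ is the Taylor remainder. Subtract $f(x)$, divide by $\epsilon^2$, and take $\mathbb{E}_{r\sim\text{Unif}(S^{D-1})}$. The next step is to compute the two needed moments of $r$ on the unit sphere: by the symmetry $r \stackrel{d}{=} -r$, the first moment vanishes, $\mathbb{E}[r]=\vzero$, which kills the gradient term. For the second moment, isotropy of the uniform distribution on $S^{D-1}$ forces $\mathbb{E}[r r^\top] = c\, I$ for some scalar $c$, and taking traces with $\|r\|^2 = 1$ gives $c = 1/D$, hence $\mathbb{E}[r r^\top] = \tfrac{1}{D} I$.

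Applying these to the quadratic term, $\mathbb{E}[r^\top \nabla^2 f(x) r] = \Tr(\nabla^2 f(x)\, \mathbb{E}[r r^\top]) = \tfrac{1}{D}\Tr(\nabla^2 f(x)) = \tfrac{1}{D}\Delta f(x)$. Combining,
\begin{equation*}
\frac{1}{\epsilon^2}\bigl[\mathbb{E} f(x+\epsilon r) - f(x)\bigr] = \frac{1}{2D}\, \Delta f(x) + \frac{1}{\epsilon^2}\mathbb{E}[R(\epsilon, r)].
\end{equation*}
This identifies the leading term as $\Delta f(x)$ up to the constant multiplier $1/(2D)$, matching the claim.

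The only real obstacle is controlling the remainder. Since $f$ is only assumed twice differentiable, I cannot invoke a third-order Taylor bound globally. The clean fix is to write the remainder in integral (Peano) form, $R(\epsilon, r) = \tfrac{\epsilon^2}{2}\, r^\top[\nabla^2 f(x+\theta \epsilon r) - \nabla^2 f(x)]\, r$ for some $\theta \in [0,1]$, and appeal to continuity of $\nabla^2 f$ at $x$ together with $\|r\|=1$ to conclude $\tfrac{1}{\epsilon^2}\mathbb{E}[R(\epsilon, r)] = o(1)$ as $\epsilon \to 0$. If one instead assumes $f \in C^3$ in a neighborhood of $x$, the odd third-order term vanishes under $\mathbb{E}[r^{\otimes 3}] = 0$ (again by $r \stackrel{d}{=} -r$), yielding the sharper rate $O(\epsilon^2)$; this stronger statement is worth noting as a remark, but is not needed for the proposition as stated.
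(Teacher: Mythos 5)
Your proof is correct and takes essentially the same route as the paper: a second-order Taylor expansion, $\mathbb{E}[r]=0$ to kill the gradient term, and isotropy of the uniform distribution on the sphere to reduce $\mathbb{E}[r^\top \nabla^2 f(x)\, r]$ to a multiple of $\Tr(\nabla^2 f(x))$. The only cosmetic differences are that you compute $\mathbb{E}[rr^\top]=\tfrac{1}{D}I$ directly and so obtain the explicit constant $\tfrac{1}{2D}$ (the paper instead diagonalises the Hessian and leaves the constant as $\tfrac{1}{2}\Var(r_1)$), and your treatment of the remainder via the Peano form is, if anything, slightly more careful than the paper's appeal to dominated convergence.
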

\begin{proof}
We begin with a Taylor expansion on $f(x+\epsilon r)$:
\begin{align}
    f(x+\epsilon r)=f(x)+\epsilon r^T \nabla f(x) + \frac{1}{2} \epsilon^2 r^T \nabla^2 f(x) r + h(\epsilon, r)
\end{align}
where $\frac{h(\epsilon,r)}{\epsilon^2}\rightarrow 0$ as $\epsilon \downarrow 0$ for any $r$. Thus,
\begin{align}
    \frac{1}{\epsilon^2}\left[\mathbb{E}f(x+\epsilon r)-f(x)\right]&=\frac{1}{2} \mathbb{E}\left[r^T \nabla^2 f(x) r + \frac{h(\epsilon, r)}{\epsilon^2}\right]\\
    &=\frac{1}{2} \mathbb{E}[r^T \nabla^2 f(x) r] + o(\epsilon)
\end{align}
by $\mathbb{E}r=0$ and the Dominated Convergence Theorem for Lebesgue integrals (also noting that $r$ is defined on a compact set $S^{D-1}$).

We now apply a change of basis such that the Hessian $\nabla^2 f(x)$ is diagonalised: $\nabla^2 f(x)=Q^T\Lambda Q$. This is possible because the Hessian is a real, symmetric matrix. Since the change of basis do not alter the distribution for $r$, we compute
\begin{align}
    \mathbb{E}[r^T \nabla^2 f(x) r]&=\mathbb{E}[r^T \Lambda r]=\text{Var}(r_1)\lambda_1+\cdots+\text{Var}(r_D)\lambda_D\\
    &=\text{Var}(r_1)(\lambda_1+\cdots+\lambda_D)=\text{Var}(r_1)\text{tr}(\nabla^2 f(x))=\text{Var}(r_1)\Delta f (x)
\end{align}
using the isotropicity of $r$ and the preservation of the trace against changes of basis. We thus conclude
\begin{align}
    \frac{1}{\epsilon^2}\left[\mathbb{E}f(x+\epsilon r)-f(x)\right]&=C\Delta f(x) + o(\epsilon)
\end{align}
where $C$ is a constant independent of $f$ and $x$.
\end{proof}

\paragraph{Results.}
Figure \ref{fig:radiusres_colordsprites} shows the approximated curvature at varying radius values $\epsilon$ around different solutions for ResNet20. We have used 100 Monte-Carlo samples. We observe that for DSprites, the curvature of the orientation loss surface tends to overshoot quickly as one walks away from the solution point. Surfaces for other cues tend to be smoother. On UTKFace, gender tends to provide a steeper surface than the other cues like age and ethnicity.

\begin{figure}[h]
    \centering
		\includegraphics[width=.45\linewidth]{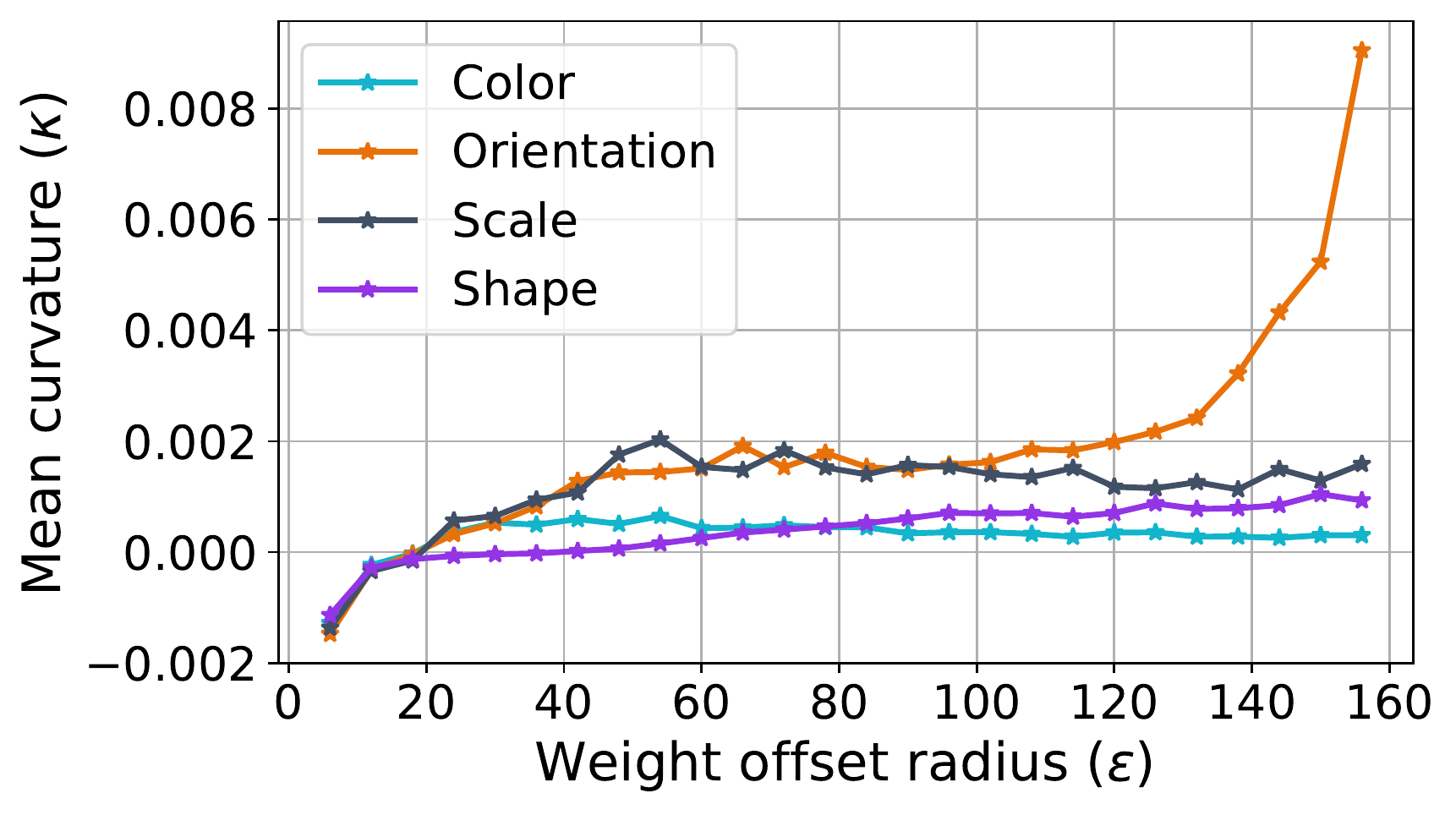}
		\includegraphics[width=.45\linewidth]{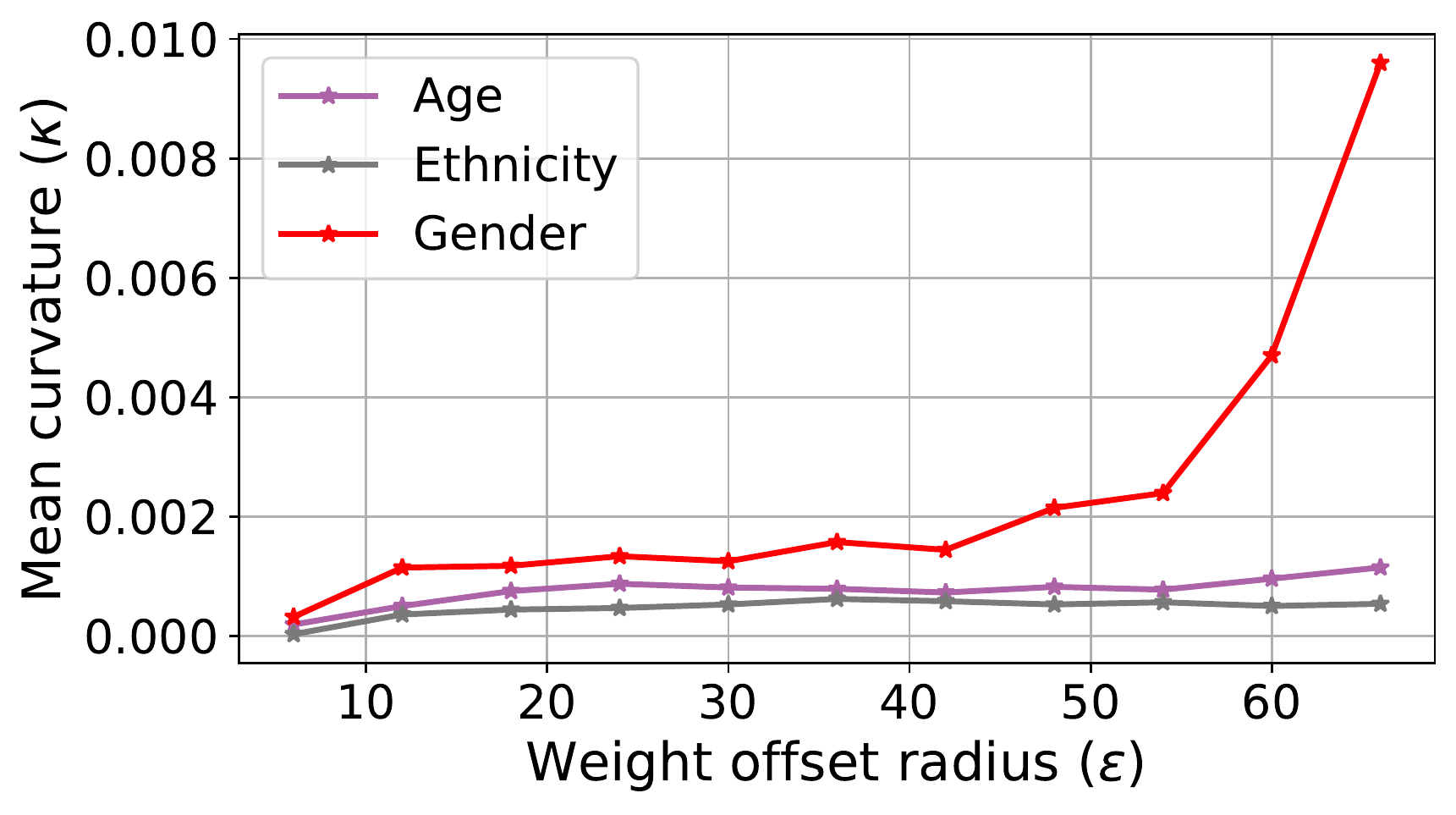}
	\caption{\small\textbf{Mean curvature around different solutions.} Left: DSprites. Right: UTKFace.}
	\label{fig:radiusres_colordsprites}
\end{figure}

\end{document}